\newtheorem{theorem}{Theorem}
\newtheorem{lemma}{Lemma}
\newtheorem{proposition}{Proposition}
\newtheorem{corollary}{Corollary}
\newtheorem{definition}{Definition}
\newtheorem{remark}{Remark}
\newtheorem{condition}{Condition}
\newcommand{\reals}{{\mathbb{R}}}
\newcommand{\integers}{{\mathbb{Z}}}
\newcommand{\naturals}{{\mathbb{N}}}
\newcommand{\diff}{{\rm d}}
\newcommand{\expect}[1]{\mathbb{E}\left[ #1 \right]}
\newcommand{\prob}[1]{ \mathbb{P}\left\{ #1 \right\} }
\newcommand{\Bern}{{\rm Bern}}
\newcommand{\eg}{e.g.\xspace}
\newcommand{\ie}{i.e.\xspace}
\newcommand{\iprod}[2]{\left \langle #1, #2 \right\rangle}
\newcommand{\indc}[1]{{\mathbf{1}_{\left\{{#1}\right\}}}}
\newcommand{\calE}{{\mathcal{E}}}
\newcommand{\calH}{{\mathcal{H}}}
\newcommand{\calL}{{\mathcal{L}}}
\newcommand{\calM}{{\mathcal{M}}}
\newcommand{\calP}{{\mathcal{P}}}
\newcommand{\calS}{{\mathcal{S}}}
\newcommand{\calT}{{\mathcal{T}}}
\newcommand{\calX}{{\mathcal{X}}}
\DeclareMathAlphabet{\varmathbb}{U}{bbold}{m}{n}
\newcommand{\vct}[1]{#1}
\newcommand{\mtx}[1]{#1}
\renewcommand{\hat}{\widehat}
\renewcommand{\tilde}{\widetilde}
\newcommand{\MSE}{\textup{MSE}}
\newcommand{\ER}{Erd\H{o}s-R\'{e}nyi\xspace}
\newcommand{\Hold}{H\"{o}lder\xspace}
\begin{document}

\title{Rates of Convergence of Spectral Methods for Graphon Estimation}

\author{Jiaming Xu\thanks{
Jiaming Xu is with Krannert School of Management, Purdue University, West Lafayette, IN 47907.
\texttt{xu972@purdue.edu}.}
}

\maketitle
\begin{abstract}
This paper studies the problem
of estimating the grahpon model -- the underlying generating mechanism 
of a network.
Graphon estimation arises in many applications such
as  predicting missing links in networks and 
learning user preferences in recommender systems.
The graphon model deals with a random graph of 
$n$ vertices such that each pair of two vertices 
$i$ and $j$ are connected independently with probability
$\rho \times f(x_i,x_j)$, where $x_i$ is the unknown $d$-dimensional 
label of vertex $i$, $f$ is an unknown symmetric function, and $\rho$
is a scaling parameter characterizing the graph sparsity.
Recent studies have identified the minimax error rate
of estimating the graphon from a single realization
of the random graph. However, there exists a wide gap between
the known error rates of computationally efficient estimation procedures
and the minimax optimal error rate.

Here we analyze a spectral method, namely 
universal singular value
thresholding (USVT) algorithm,
in the relatively sparse regime with the average vertex degree $n\rho=\Omega(\log n)$. 
When $f$ belongs to H\"{o}lder or Sobolev space with smoothness 
index $\alpha$, we show the error rate of USVT is at most $(n\rho)^{ -2 \alpha / (2\alpha+d)}$,
approaching the  minimax optimal error rate $\log (n\rho)/(n\rho)$ for $d=1$ as $\alpha$ increases.  Furthermore, when $f$ is analytic, we show the error rate of USVT is 
at most $\log^d (n\rho)/(n\rho)$.  In the special case of stochastic block model
with $k$ blocks, the error rate of USVT is at most $k/(n\rho)$, which
is larger than the  minimax optimal error rate by at most a multiplicative factor $k/\log k$.
This coincides with the computational gap observed for community detection. A key step of our analysis is 
to derive the eigenvalue decaying rate of the edge probability matrix using piecewise polynomial approximations of the graphon function $f$. 
\end{abstract}

\section{Introduction}

Many modern systems and datasets can be represented as networks with vertices denoting the objects and edges (possibly weighted or labelled) encoding their interactions.  
Examples include online social networks such as Facebook friendship network, biological networks such as protein-protein interaction networks, and recommender systems such as movie rating datasets. A key task in network analysis is to estimate the underlying network generating mechanism, \ie, how the edges are formed in a network. It is useful for many
important applications such as studying network evolution over time~\cite{pensky2016dynamic}, predicting missing links in networks~\cite{miller2009nonparametric,Airoldi13,gao2016optimal}, learning hidden user prefererences in recommender systems~\cite{song2016blind}, and correcting errors in crowd-sourcing systems~\cite{lee2017unifying}.
In practice, we usually only observe a very small fraction of edge connections in these networks, which obscures the underlying network generating mechanism. 
For example, around $80\%$ of the molecular interactions in
cells of Yeast~\cite{yu2008high} are unknown. In Netflix movie dataset, about $99\%$ of movie ratings are missing and the observed ratings are noisy.

In this paper, we are interested in understanding when and how  the underlying network generating mechanism can be efficiently inferred from a single snapshot of a network.
We assume the
observed network is generated according to the graphon model~\cite{lovasz2006limits}.
Graphon is a powerful network model that plays a central
role in the study of large networks. It was originally developed 
as a limit of a sequence of graphs with growing sizes~\cite{lovasz2012large},
and has been applied to various network analysis problems ranging from testing graph properties
to counting homomorphisms to charactering distances between two graphs~\cite{lovasz2012large,borgs2008convergent,borgs2012convergent} to
detecting communities~\cite{bickel2009nonparametric}.
Concretely, given 
$n$ vertices, the edges are generated
independently, connecting each pair of two distinct
vertices $i$ and $j$ with a probability
\begin{align}
M_{ij} = f (x_i, x_j),  \label{eq:edge_prob}
\end{align}
where $x_i \in \calX$ is the latent feature vector of vertex $i$ that
captures various characteristics of vertex $i$; $f: \calX \times \calX \to [0,1]$ is a symmetric function
called graphon.  We assume no self loop and 
set $M_{ii}=0$ for $1 \le i \le n$.
We further assume the feature vectors $x_i$'s are  drawn 
i.i.d.\ from the measurable space $\calX$ according to 
a probability distribution $\mu$. 
Graphon model encompasses many 
existing network models as special cases. Setting $f$ to be
a constant $p$, it gives rise to \ER random graphs~\cite{Erdos59},
where each edge is formed independently with probability
$p$. In the case where $\calX$ is 
a discrete set of $k$ elements, the model specializes to the 
stochastic block model with $k$ blocks~\cite{Holland83},
where each vertex belongs to a community, 
and the edge probability between $i$ and $j$ 
depends only on which communities they are in. If
$\calX$ is a Euclidean space of dimension $d$ and 
$f(x_i,x_j)$ is a function of the Euclidean distance $\|x_i -x_j\|$, 
then the grahon model reduces to 
the latent space model~\cite{Handcock02latentspace,Handcock07}.

To further model the partial observation of the networks,
we assume every edge is observed independently with probability $\rho \in [0,1]$,
where $\rho$ may converge to $0$ as $n \to \infty$. 
Let $\mtx{A}$ denote the adjacency matrix of the resulting observed graph
with $A_{ii}=0$ by convention. 
Then conditional on $\vct{x}=(x_1, \ldots, x_n)$, 
 for $1 \le i<j \le n$, $ A_{ij}=A_{ji} $ are independently distributed as
$ \Bern \left( \rho M_{ij} \right)$.   
The problem of interest is 
to estimate the underlying network generating mechanism -- either the edge probability matrix $M$ or the graphon $f$ -- from a single observation of the network $A.$
It turns out that estimating $M$ and estimating $f$ are the twin problems, and the result in the former can be readily extended to the latter, as shown in~\cite[Section 3]{klopp2015oracle}. Thus in this paper we shall focus on 
estimating the edge probability matrix $M$. 
To measure the quality of an estimator $\hat{\mtx{M}}$ of $M$, we 
consider the mean-squared error: 
\begin{align}
 \MSE(\hat{\mtx{M}}) = (1/n^2) \mathbb{E}[ \| \mtx{M} - \hat{ \mtx{M} } \|_F^2], \label{eq:def_mse}
 \end{align}
 which is the expected difference between the estimated edge
 probability matrix and the true one in the normalized Frobenius norm. Furthermore, to
 investigate the fundamental estimation limits, we take the decision-theoretic
 approach and consider the minimax mean-squared error: $ \inf_{\mtx{\hat{M} }} \sup_{\mtx{M} \in \calM}  \MSE(\hat{\mtx{M}})$,
 where $\calM$ denotes a set of admissible edge probability matrices. 
The minimax estimation error depends on
the smoothness of graphon $f$, 
the structure of latent space $(\calX, \mu)$, 
and the observation probability $\rho.$

 There is a recent surge of interest in graphon estimation
 and various procedures have been proposed and analyzed~\cite{gao2015rate,klopp2015oracle,gao2016optimal,wolfe2013nonparametric,Airoldi13,yang2014nonparametric,chan2014consistent,cai2014iterative,
 zhang2015estimating,borgs2015private,klopp2017optimal}. 
A recent line of work~\cite{gao2015rate,klopp2015oracle,gao2016optimal} 
has characterized the minimax error rate 
in certain special regimes. 
In particular, for stochastic block model with $k$ blocks, it
is shown that the minimax error rate is  $\frac{k^2}{n^2 \rho} + \frac{ \log k}{n \rho}$. 
For fully observed graphons with $f$ being \Hold smooth on $\calX=[0,1]$ and $\rho=1$, 
the minimax error rate turns out be $n^{-1} \log k + n^{-2\alpha/(\alpha+1)},$
where $\alpha$ is the smoothness index of $f$. 
This result was extended by~\cite{klopp2015oracle,gao2016optimal} to sparse
regimes\footnote{The minimax result derived in~\cite{klopp2015oracle} contains 
minor errors. 
In particular, the minimax rate is claimed to be 
lower bounded by $\log n/(n\rho)$.  We disproved this claim and showed 
that it is possible to strictly improve this rate and
achieve $\log (n\rho)/(n \rho)$. See~\prettyref{sec:comparison} for details.} with $\rho \to 0$.


From a computational perspective, the problem appears to be much harder and
far less well-understood. 
In the special case where $f$ is $\alpha$-\Hold smooth on $\calX=[0,1]$,
a universal singular value thresholding (USVT) algorithm is shown in~\cite{chatterjee2015matrix} to achieve an error rate of $n^{-1/3} \rho^{-1/2}$.
However, this performance guarantee is rather weak  and 
far from the  minimax optimal rate  $ \log (n \rho) / (n\rho)$. 
A similar spectral method is shown in~\cite{Xu2014edge} to
 achieve a vanishing MSE when $n \rho \gg \log n$ but
without an explicit characterization of the rate of the convergence. 
The nearest-neighbor based
approach is analyzed in~\cite{song2016blind}  under a stringent assumption $n\rho  \gg \sqrt{n}$. A simple degree sorting algorithm
is shown to achieve an error rate of $ \left( \log (n \rho ) / ( n \rho ) \right)^{\alpha/(4\alpha+d)} $ for 
$\alpha \in (0,1]$ under the restrictive assumption that $\int_{0}^{1} f(x,y) \diff y$ is strictly monotone in $x$.

In summary, despite the recent significant effort devoted to developing fundamental limits
 and efficient algorithms for graphon estimation, an understanding of the 
statistical and computational aspects of graphon estimation is still lacking.
 In particular, there is a wide gap between the known performance bounds  of computationally efficient
procedures and the minimax optimal estimation rate. 
This raises a fundamental question:\\

\emph{Is there a polynomial-time algorithm that is guaranteed to achieve the 
minimax optimal rate?}\\

In this paper, we provide a partial answer to this question by analyzing the 
universal singular value
thresholding (USVT) algorithm proposed by Chatterjee~\cite{chatterjee2015matrix}.
The universal singular value thresholding is a simple and versatile method for
structured matrix estimation and has been applied to a variety of different
problems such as ranking~\cite{shah2016stochastically}.
It truncates the singular values of $A$ 
at a threshold  slightly above the spectral norm $\|A-\expect{A}\|$, and estimates 
$M$ by a properly rescaled $A$ after truncation. 
It is computationally efficient when $A$ is sparse. 
However, its performance guarantee established in~\cite{chatterjee2015matrix}
is rather weak: the total number of observed edges needs 
to be much larger than $n^{(2d+2)/(d+2)}$  to attain a vanishing MSE.
In contrast, our improved performance bound shows that 
the  total number of observed edges only needs to be  a constant factor larger than $n \log n$,
irrespective of the latent space dimension $d$. 

 
More formally, by assuming the average vertex degree $n\rho=\Omega(\log n)$
and $\calX$ is a compact subset in $\reals^d$, the mean-squared error rate of USVT  is shown to be  upper bounded
by $(n\rho)^{  - 2\alpha / (2\alpha+d)}$,
when $f$ belongs to either  $\alpha$-smooth \Hold function class $\calH(\alpha, L)$
or $\alpha$-smooth Sobolev space $\calS(\alpha,L)$. 
Interestingly, our convergence
rate of USVT closely resembles the typical rate $N^{-2\alpha/(2\alpha+d)}$
in the nonparametric regression problem~\cite{Tsybakov2008}, where $N$ denotes
the number of observations and $d$ is the function dimension.
When $d=1$, the convergence rate of USVT is approaching
the minimax optimal rate $\log (n\rho) / (n\rho)$ 
as $f$ becomes smoother, \ie, $\alpha$ increases. 
In fact, we show that if $f$ is analytic with infinitely many times differentiability\footnote{The minimax lower bound in \cite[Appendix A.1]{gao2015rate}  is only established for the $\alpha$-smooth \Hold function class for
any fixed $\alpha$. It is an open question whether the error rate of $\log (n\rho) / (n\rho)$ is minimax-optimal for analytic graphons.}, 
then the error rate is
upper bounded by $\log^d(n\rho)/(n\rho).$

In the special case of stochastic block model
with $k$ blocks, the error rate of USVT is shown to be  
$k/(n\rho)$,  which is larger than the optimal
 minimax rate by at most a multiplicative factor $k/\log k$. 
This factor coincides with the ratio of the Kesten-Stigum threshold
and information-theoretic threshold for community detection~\cite{banks-etal-colt,AbbeSandon16,Banks16}. Based on 
compelling but non-rigorous 
statistical physics arguments, it is believed
that no polynomial-time algorithms are able to detect
the communities between the KS-threshold and IT-threshold~\cite{moore2017computer}.
This coincidence indicates that $k/(n\rho)$ may be the optimal
estimation rate among all polynomial-time algorithms, and
the minimax optimal rate may not be attainable in polynomial-time.
During the preparation of this manuscript, we became aware of an 
earlier arXiv preprint~\cite[Proposition 4]{klopp2017optimal} which also 
derives the error
rate of $k/(n\rho)$.

Our proof incorporates three interesting ingredients. 
One is a characterization of the estimation error of USVT in terms of 
the tail of eigenvalues of $M$, and the spectral norm of the noise
perturbation $\|A-\expect{A}\|$, see e.g., \cite[Lemma 3]{shah2016stochastically}.
The second one is a high-probability upper bound on $\|A-\expect{A}\|$ 
using matrix concentration inequalities initially developed by~\cite{Feige05}.
The last but most important one is a characterization of the tail of eigenvalues of $M$
using piecewise polynomial approximations of $f$, which
were originally used to study the spectrum of integral operators defined by $f$~\cite{BirmanSolomjak67,Birman1977estimates}.
The piecewise \emph{constant}
approximations of $f$ have appeared in the previous work on graphon
estimation~\cite{chatterjee2015matrix,gao2015rate,klopp2015oracle},
and are sufficient for the purpose of deriving sharp
minimax estimation rates because the smoothness of $f$ beyond $\alpha=1$
does not improve the rates. However, piecewise
degree-$\lfloor \alpha \rfloor$ polynomial approximations are needed for showing 
USVT to 
achieve a faster converging rate as $\alpha$ increases.

\paragraph*{Notation} Given a measurable space $\calX$ endowed with measure $\mu$,
let $\calL^2(\calX, \mu)$ denote the space of functions $f: \calX \to \reals$ 
such that $\| f\|_2 = \left( \int_{\calX} |f|^2 \diff \mu \right)^{1/2} <\infty$. When
$\mu$ is the Lebesgue measure, we write $\calL^2(\calX)$ for simplicity.  
Let $\reals^d$ denote the $d$-dimensional Euclidean space.  
For a vector $x \in \reals^d $, let $\|x\|_2$ denote its $\ell_2$ norm and 
$\|x\|_\infty=\max_{1 \le i \le d} |x_i|$ denote its $\ell$-infinity norm.
For any matrix $\mtx{M}$, let $\|\mtx{M} \|$ denote its spectral norm and 
$\|\mtx{M} \|_F$ denote its Frobenius norm. Logarithms are natural and we adopt the convention $0 \log 0=0$.

For any positive integer $n$, let $[n]=\{1, \ldots, n\}$.
For any positive constant $\alpha$, let $\lfloor \alpha \rfloor$ denotes the 
largest integer strictly smaller than $\alpha$. For two real numbers $\alpha$ and $\beta$,
let $\alpha \wedge \beta=\min\{\alpha, \beta\}$ and $\alpha \vee \beta=\max\{ \alpha, \beta \}$. 
For any set $T \subset [n]$, let $|T|$ denote its cardinality and $T^c$ denote its complement. If $\kappa=(\kappa_1, \ldots, \kappa_d)$
is a multi-index with $\kappa_i \in \naturals$, then  $|\kappa| =\sum_{i=1}^d \kappa_i$,
$\kappa!=\prod_{i=1}^d \kappa_i!$, 
and $x^\kappa=\prod_{i=1}^d x_i^{\kappa_i}$ for a vector $x \in \reals^d.$ 
We use standard big $O$ notations,
e.g., for any sequences $\{a_n\}$ and $\{b_n\}$, $a_n=\Theta(b_n)$ or $a_n  \asymp b_n$
if there is an absolute constant $c>0$ such that $1/c\le a_n/ b_n \le c$.
Throughout the paper, we say an event occurs with high probability 
when it occurs with a probability tending to one as $n \to \infty$.

\section{Main results}
To describe our main results, we first recall the \emph{universal singular value thresholding} (USVT) algorithm
proposed in~\cite{chatterjee2015matrix}. Note that according to the graphon model \prettyref{eq:edge_prob}, 
the edge probability matrix $M$ may not be of low-rank. 
Nevertheless, it is possible that the singular values of $M$, or equivalently
magnitudes of eigenvalues, 
drop off fast enough and as a consequence $M$ is approximately low-rank. If this is indeed 
the case, then a natural idea to estimate $M$ is via low-rank approximations of $A$. In particular,
USVT truncates the singular values of $A$ 
at a proper threshold $\tau$, and estimates $M$ by the rescaled $A$ after truncation. 
\begin{algorithm}[htb]
\caption{Universal Singular Value Thresholding (USVT)~\cite{chatterjee2015matrix}}\label{alg:svt}
\begin{algorithmic}[1]
\STATE Input: $A \in \reals^{n\times n}$, $\rho \in [0,1]$ and a threshold $\tau>0$.
\STATE Let $A=\sum_{i=1}^n s_i u_i v_i^\top$ be its singular value decomposition 
with $ s_1  \ge s_2  \ge \cdots \ge s_n $. 
\STATE Let $S$ be the set of ``thresholded'' singular values: 
$$
S= \{ i: s_i \ge \tau \}.
$$
\STATE Let 
$$
\hat{A} = \sum_{i \in S} s_i  u_i v_i^\top
$$
and $\tilde{M}= \hat{A} / \rho.$
\STATE Output a matrix $\hat{M} \in [0,1]^{n\times n}$ 
such that $\hat{M}_{ii} =0$ for all $i \in [n],$ and for $1 \le i <j \le n$, 
$\hat{M}_{ij} = \hat{M}_{ji}$ and 
\begin{align*}
\hat{M}_{ij} = 
\begin{cases}
\tilde{M}_{ij}, & \quad \text{ if } \; \tilde{M}_{ij} \in [0,1] \\
1,  & \quad \text{ if } \;  \tilde{M}_{ij} >1 \\
0, & \quad \text{ if } \; \tilde{M}_{ij} <0.
\end{cases}
\end{align*}
\end{algorithmic}
\end{algorithm}

Note that  \prettyref{alg:svt} applies hard-thresholding to the singular values of $A$. Alternatively, 
we can use soft-thresholding~\cite{koltchinskii2011nuclear} and let $\hat{A} = \sum_{i \in S} (s_i-\tau)  u_i v_i^\top$. 
Our main results with the hard-thresholding 
also apply to the soft-thresholding. As argued in~\cite{chatterjee2015matrix}, the cut-off threshold $\tau$ is chosen to
be slightly above $\| A - \expect{A} \|$, so that noise is suppressed and signals corresponding to
large singular values of $\expect{A}$ are maintained. Since conditional on $\expect{A}$,
$A$ is a random matrix with independent entries bounded in $[0,1]$ of variance at most $\rho$, 
it is expected that  $\|A - \expect{A}\| \lesssim \sqrt{ n \rho}$ with high probability, 
in view of standard matrix concentration inequalities. 
This turns out to be true if the observed graph is not too sparse, \ie, 
there exists a positive constant $C$ such that
\begin{align}
n \rho \ge C \log n. \label{eq:sparsity_assum}
\end{align}
However, when the observed graph is sparse with $ n \rho = o(\log n)$,  
due to the existence of high-degree vertices, $\|A - \expect{A}\| \gg \sqrt{ n\rho}$ with 
high probability~\cite[Appendix A]{HajekWuXuSDP14}.

Motivated by the discussion above, we shall focus on the relatively sparse
regime where \prettyref{eq:sparsity_assum} holds, 
and set $\tau= c_0 \sqrt{n \rho}$ for a positive large constant $c_0$,
whose value depends on the constant $C$ in \prettyref{eq:sparsity_assum}.
It is known that with high probability,
\begin{align*}
\| A- \expect{A} \| \le \kappa  \sqrt{ n r },
\end{align*}
where 
\begin{align}
\kappa= 
\begin{cases}
4+o(1) & n\rho = \omega(\log n)  \\
2+o(1)  &  n \rho =\omega(\log^4 n)
 \end{cases},
\end{align}
see, \eg, ~\cite[Lemma 30]{HajekWuXu_one_sdp15}.
Hence, the constant $c_0$ can be set to be
a universal constant strictly 
larger than $4$ in the case of $n\rho \gg \log(n)$
and $2$ in the case of $n \rho \gg \log^4(n)$.
Notably, in these cases, the cut-off threshold $\tau$ is universal,
independent of the underlying graphon $f$.
Our first result provides an upper bound to the estimation error of USVT.

\begin{theorem}\label{thm:mse_svt}
Consider the relatively sparse regime where \prettyref{eq:sparsity_assum} holds. 
For all $c>0$ there exists a positive constant $\kappa$ such that  if $\tau= (1+\delta) \kappa \sqrt{n\rho}$
for a fixed constant $\delta>0$,
then conditional on $M$, with probability at least $1-n^{-c}$, 
$$
\frac{1}{n^2} \| \hat{M} - M \|_F^2 \le 
16 (1+\delta)^2
\min_{0 \le r \le n} \left(    \frac{ \kappa^2 r}{n\rho}    + \frac{1}{n^2\delta^2} \sum_{i \ge r+1} \lambda_i^2 (M)  \right).
$$
Furthermore, it follows that 
$$
\MSE(\hat{M}) \le 16 (1+\delta)^2
\min_{0 \le r \le n} \left(    \frac{ \kappa^2 r}{n\rho}    + \frac{1}{n^2\delta^2} \sum_{i \ge r+1} \expect{\lambda_i^2 (M) } \right) + n^{-c}. 
$$
\end{theorem}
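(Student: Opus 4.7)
The plan is to combine a probabilistic bound on the noise spectral norm $\|E\|$ (where $E := A-\Expect[A]$) with a deterministic oracle inequality for the hard-thresholded estimator $\hat{A}$. As an initial reduction: $M_{ii}=\Expect[A_{ii}]=0$ implies $\Expect[A]=\rho M$, and since $M_{ij}\in[0,1]$ the entrywise clipping to $[0,1]$ in the final step of \prettyref{alg:svt} is a non-expansion, so $\|\hat{M}-M\|_F^2 \le \|\tilde{M}-M\|_F^2 = \rho^{-2}\|\hat{A}-\Expect[A]\|_F^2$. On the probabilistic side, the Feige--Ofek / matrix Bernstein bound recalled in the introduction yields, under $n\rho \ge C\log n$ and for $\kappa$ chosen large enough, an event $\calE := \{\|E\|\le\kappa\sqrt{n\rho}\}$ of probability at least $1-n^{-c}$. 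Since $\tau=(1+\delta)\kappa\sqrt{n\rho}$, on $\calE$ the noise sits at $\|E\|\le\tau/(1+\delta)$, a multiplicative factor $1/(1+\delta)$ below the threshold.

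The heart of the argument is the deterministic oracle inequality: on $\calE$, for every $0\le r\le n$,
\begin{align*}
\|\hat{A}-\Expect[A]\|_F^2 \;\le\; 16\,r\,\tau^2 \;+\; \frac{16(1+\delta)^2}{\delta^2}\sum_{i>r}\lambda_i^2(\Expect[A]).
\end{align*}
To prove this, write $B:=\Expect[A]$ with best rank-$r$ approximation $B_r$, so that $\|B-B_r\|_F^2=\sum_{i>r}\lambda_i^2(B)$. The triangle inequality and AM--GM give $\|\hat{A}-B\|_F^2 \le 2\|\hat{A}-B_r\|_F^2 + 2\sum_{i>r}\lambda_i^2(B)$. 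Since $\hat{A}-B_r$ has rank at most $|S|+r$, one has $\|\hat{A}-B_r\|_F^2\le(|S|+r)\|\hat{A}-B_r\|^2$, with the spectral norm bounded by $\|\hat{A}-A\|+\|E\|+\|B-B_r\| < \tau + \tau/(1+\delta) + \sigma_{r+1}(B)$, using the defining property $\|\hat{A}-A\|=s_{|S|+1}(A)<\tau$. Weyl's inequality combined with $\|E\|\le\tau/(1+\delta)$ sandwiches $|S|$ between $|\{i:\sigma_i(B)\ge\tau(2+\delta)/(1+\delta)\}|$ and $|\{i:\sigma_i(B)\ge\tau\delta/(1+\delta)\}|$, so by Markov the number of indices $i>r$ with $\sigma_i(B)\ge\tau\delta/(1+\delta)$ is at most $(1+\delta)^2/(\tau^2\delta^2)\sum_{i>r}\sigma_i^2(B)$. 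A case split on whether $\sigma_{r+1}(B)$ exceeds $\tau$ then absorbs $\sigma_{r+1}^2(B)$ either into $r\tau^2$ (when $\sigma_{r+1}\le\tau$) or into the tail with a multiplicative loss of $(1+\delta)^2/\delta^2$ (when $\sigma_{r+1}>\tau$), and the stated constants $16$ and $16(1+\delta)^2/\delta^2$ fall out of this bookkeeping.

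Substituting $\tau^2=(1+\delta)^2\kappa^2 n\rho$ and $\lambda_i(\Expect[A])=\rho\lambda_i(M)$ and dividing by $n^2\rho^2$ gives the conditional high-probability bound. For the MSE statement, on $\calE^c$ (probability $\le n^{-c}$) one has $(1/n^2)\|\hat{M}-M\|_F^2\le 1$ deterministically since all entries lie in $[0,1]$, producing the additive $n^{-c}$; on $\calE$, Jensen's inequality $\Expect[\min_r f_r]\le\min_r\Expect[f_r]$ pushes the expectation inside the minimum and converts $\lambda_i^2(M)$ into $\Expect[\lambda_i^2(M)]$. The main obstacle is the oracle inequality itself: one has to control the interplay between $|S|$, which depends on the noisy matrix $A$, and the tail spectrum of the noise-free target $B$. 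The key qualitative insight is that only singular values of $B$ sitting in the narrow Weyl band around $\tau$ can be misclassified by the SVT, and the amplification factor $(1+\delta)^2/\delta^2$ in the tail term comes precisely from the inverse-square of the relative gap $\delta/(1+\delta)$ between the threshold and the noise level.
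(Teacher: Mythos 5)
Your overall architecture matches the paper's: a high-probability event $\calE=\{\|A-\Expect[A]\|\le\kappa\sqrt{n\rho}\}$ from the Feige--Ofek-type concentration lemma, a deterministic oracle inequality for the thresholded matrix, the observation that clipping to $[0,1]$ is a non-expansion, the trivial bound $1$ on $\calE^c$ giving the additive $n^{-c}$, and the exchange of expectation and minimum. Those outer steps are fine. The gap is inside your deterministic oracle inequality, which is exactly the step the paper delegates to its Lemma~1. You compare $\hat{A}$ to the rank-$r$ truncation $B_r$ of $B=\Expect[A]$ for the \emph{given} $r$, bound the rank of $\hat{A}-B_r$ by $|S|+r$ and its spectral norm by $\tau+\tau/(1+\delta)+\sigma_{r+1}(B)$, and then claim a case split on $\sigma_{r+1}(B)\lessgtr\tau$ ``absorbs'' $\sigma_{r+1}^2(B)$ with the stated constants. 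This does not work when $\sigma_{r+1}(B)\gg\tau$: the rank factor and the spectral factor multiply, producing a term of order $(|S|+r)\,\sigma_{r+1}^2(B)\gtrsim r\,\sigma_{r+1}^2(B)$. Take $\sigma_1(B)=\cdots=\sigma_{r+1}(B)=\sigma\gg\tau$ and $\sigma_i(B)=0$ for $i>r+1$ (realizable, e.g., by an SBM with $r+1$ balanced blocks when $r\ll\sqrt{n\rho}$): your chain of inequalities then yields only $\asymp r\sigma^2$, whereas the right-hand side you must prove at this $r$ is $16\bigl(r\tau^2+\tfrac{(1+\delta)^2}{\delta^2}\sigma^2\bigr)$, smaller by a factor of order $r$ once $\sigma^2\gg r\tau^2$. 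Since the theorem's minimum requires the per-$r$ bound for \emph{every} $r$, no bookkeeping of constants can close this; the route itself must change for such $r$. (Secondarily, even in the benign case $\sigma_{r+1}(B)\le\tau$ your accounting gives roughly $2\bigl(2r+\tfrac{(1+\delta)^2}{\delta^2\tau^2}\sum_{i>r}\sigma_i^2\bigr)(3\tau)^2+2\sum_{i>r}\sigma_i^2$, i.e., constants near $36$ and $18(1+\delta)^2/\delta^2$, not the stated $16$ and $16(1+\delta)^2/\delta^2$.)

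The paper's Lemma~1 avoids both problems by adapting the comparison rank to the spectrum of $B$ rather than to $r$: set $\ell=\#\{i:\ s_i(B)\ge\tfrac{\delta}{1+\delta}\tau\}$. Weyl's inequality then gives $s_{\ell+1}(A)<\tau$, so $\hat{A}$ itself has rank at most $\ell$, and $\|\hat{A}-B_\ell\|\le\|\hat{A}-A\|+\|A-B\|+\|B-B_\ell\|\le 2\tau$, yielding $\|\hat{A}-B\|_F^2\le 16\,\ell\,\tau^2+2\sum_{i>\ell}s_i^2(B)$ with clean constants. Only at the end is this bound at $\ell$ converted into a bound at an arbitrary $r$, via the exchange observation that every singular value above the cut $\tfrac{\delta}{1+\delta}\tau$ contributes at least $\tfrac{\delta^2}{(1+\delta)^2}\tau^2$ to the tail and every one below contributes less, so $\ell\tau^2+\tfrac{(1+\delta)^2}{\delta^2}\sum_{i>\ell}s_i^2(B)\le r\tau^2+\tfrac{(1+\delta)^2}{\delta^2}\sum_{i>r}s_i^2(B)$ for all $r$. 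To repair your argument you would either need to reproduce this exchange step (prove your bound only for $r$ with $\sigma_{r+1}(B)\le\tfrac{\delta}{1+\delta}\tau$ and show the minimum over all $r$ is attained, up to the stated constants, at such an $r$), or switch to the adapted rank $\ell$ as above; as written, the crucial case is asserted rather than proved.
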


\prettyref{thm:mse_svt} gives an upper bound to the estimation error of USVT in terms of the tail of eigenvalues of $M$ and the observation probability $\rho.$ The upper bound invovles minimization of a sum of two terms over integers $0 \le r \le n$: the first term $r/(n\rho)$ can be viewed as the estimation error for a rank-$r$ matrix;
the second term $n^{-2} \sum_{i \ge r+1} \lambda_i^2 (M)$ is the tail of eigenvalues of $M$
and charaterizes the approximation error of 
$M$ by the best rank-$r$ matrix. The optimal $r$ is chosen to achieve the best
trade-off between the estimation error and the approximaiton error. Moreover, a lighter tail of eigenvalues of $M$ implies a faster convergence rate of the estimation error. 
To characterize different tails of eigenvalues of $M$, we introduce the 
following definitions of polynomial and super-polynomial decays. 

\begin{definition}[Polynomial  decay]
We say the eigenvalues of $\mtx{M}$ asymptotically satisfy a polynomial decay with rate $\beta > 0$ if for all
integers $0 \le r \le n-1$, 
$$
\frac{1}{n^2} \sum_{i \ge r+1} \expect{ \lambda_i^2 (M) } \le c_0 r^{-\beta} + c_1 n^{-1},
$$
where $c_0$ and $c_1$ are two  constants independent of $n$ and $r.$ 
\end{definition}

\begin{definition}[Super-polynomial decay]
We say the eigenvalues of $\mtx{M}$ asymptotically satisfy a super-polynomial decay with rate $\alpha>0$ if 
for all  integers $0 \le r \le n-1$,  
$$
\frac{1}{n^2} \sum_{i \ge r+1} \expect{ \lambda_i^2 (M) } \le  
c_0 e^{- c_2 r^{\alpha} } + c_1 n^{-1} ,
$$
where $c_0, c_1, c_2$ are constants independent of $n$ and $r.$ 
\end{definition}

We remark that in the above two definitions, we allow 
for a residual term $c_1 n^{-1}$, which is responsible for
the contribution of diagonal entries of $M$. 
According to \prettyref{thm:mse_svt}, this residual
term only induces an additional $n^{-1}$ error in the 
upper bound to MSE and will not affect our main results. 
The following corollary readily follows from \prettyref{thm:mse_svt} 
by choosing the optimal $r$ according to the decay rates of eigenvalues of $M$.

\begin{corollary}\label{cor:main}
Consider the relatively sparse regime where \prettyref{eq:sparsity_assum} holds
and suppose the eigenvalues of 
$\mtx{M} $ satisfy a polynomial decay with rate $\beta>0$.
Then there exists a positive constant $\kappa>0$ such that
if $\tau = (1+\delta) \kappa \sqrt{n \rho}$ for a fixed constant $\delta>0$, 
$$
\MSE(\hat{M}) \le c'   (n \rho )^{  - \frac{\beta }{ \beta+1} }.
$$
If instead the eigenvalues of 
$\mtx{M} $ satisfy a super-polynomial decay with rates $\alpha>0$, 
then
$$
\MSE(\hat{M}) \le c'  \frac{ \left( \log (n\rho)  \right)^{1/\alpha } }{n\rho},
$$
where $c'$ is a positive constant independent of $n$.
\end{corollary}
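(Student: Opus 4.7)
The plan is to directly apply \prettyref{thm:mse_svt}, which bounds $\MSE(\hat{M})$ by a minimization over the truncation rank $r$ of two competing terms: an ``estimation'' term $\kappa^2 r/(n\rho)$ that grows linearly in $r$, and a ``tail'' term $n^{-2}\sum_{i\ge r+1}\Expect[\lambda_i^2(M)]$ that shrinks in $r$. Each claim then follows by plugging the corresponding decay assumption into the tail term and choosing $r$ so that the two terms balance up to constants.

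For the polynomial decay case, inserting $n^{-2}\sum_{i\ge r+1}\Expect[\lambda_i^2(M)] \le c_0 r^{-\beta} + c_1/n$ reduces the problem to minimizing $C_1 r/(n\rho) + C_2 r^{-\beta}$ over integers $r\in [0,n]$, up to an $O(1/n)$ residual. Setting the derivative in $r$ to zero gives the critical value $r^\star \asymp (n\rho)^{1/(\beta+1)}$, and substituting back makes both competing terms of order $(n\rho)^{-\beta/(\beta+1)}$. Since $\rho\le 1$ and $n\rho\ge C\log n$, the residuals $c_1/n$ and $n^{-c}$ are dominated by this rate, giving the first claim after rounding $r^\star$ to the nearest admissible integer.

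For the super-polynomial decay case the objective becomes $C_1 r/(n\rho) + C_2 e^{-c_2 r^\alpha}$. A direct derivative calculation is awkward because the exponential decays much faster than the linear term grows, so I would instead pick $r^\star$ to make the exponential term comparable to $1/(n\rho)$, namely $c_2 (r^\star)^\alpha = \log(n\rho)$, giving $r^\star \asymp (\log(n\rho))^{1/\alpha}$. With this choice, the linear term is of order $(\log(n\rho))^{1/\alpha}/(n\rho)$ while the exponential term is of order $1/(n\rho)$, so the former dominates and yields the second claim.

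There is no real obstacle here: the corollary is essentially a calculus exercise on top of \prettyref{thm:mse_svt}. The only mild bookkeeping is to verify in each case that $r^\star\in [0,n]$ (which is automatic in the sparse regime, since $(n\rho)^{1/(\beta+1)}\le n\rho\le n$ and $(\log(n\rho))^{1/\alpha}\ll n$ for large $n$), that rounding $r^\star$ to an integer changes only multiplicative constants, and that the residual terms $c_1/n$ from the decay definitions together with the failure-probability contribution $n^{-c}$ from \prettyref{thm:mse_svt} can be absorbed into a single constant $c'$ multiplying the stated rates.
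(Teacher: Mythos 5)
Your proposal is correct and follows essentially the same route as the paper: apply \prettyref{thm:mse_svt}, plug in the decay assumption, and choose $r = \lfloor (n\rho)^{1/(\beta+1)} \rfloor$ in the polynomial case and $r = \lfloor (\log(n\rho)/c_2)^{1/\alpha} \rfloor$ in the super-polynomial case, absorbing the $c_1/n$ and $n^{-c}$ residuals into the constant. The paper's proof is exactly this one-line choice of $r$ (with $c=1$); your additional bookkeeping about $r^\star \in [0,n]$ and rounding is harmless but not needed beyond what you state.
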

\begin{proof}
The first conclusion follows from \prettyref{thm:mse_svt} by choosing  $c =1 $ and 
$r = \lfloor (n \rho)^{1/(\beta+1)} \rfloor $ and 
the second one follows by choosing $c = 1$ and 
$r = \lfloor \left( \log (n\rho)/c_2\right)^{1/\alpha} \rfloor$.
\end{proof}

Next we specialize our general results in different settings by
deriving the decay rates of eigenvalues of $M.$

\subsection{Stochastic block model}
We first present  results on the rate of convergence in the stochastic block model setting,
where $x_i \in \{ 1, 2, \ldots, k\}$ indicating which community that vertex $i$ belongs to.
In this case, $M_{ij}$ only depends on the communities of vertex $i$ and vertex $j$, and 
$M$ has rank at most $k$.

\begin{theorem}\label{thm:sbm_rate}
Assume \prettyref{eq:sparsity_assum} holds under the stochastic block model with $k$ blocks,
Then there exists a positive constant $\kappa>0$ such that if 
$\tau = (1+\delta) \kappa \sqrt{n\rho}$ for some fixed constant $\delta>0$,
$$
\MSE(\hat{M}) \le c'' \left[  \frac{k}{n\rho} \wedge 1 \right].
$$
where $c''$ is a positive constant depending on $\kappa$ and $\delta.$
\end{theorem}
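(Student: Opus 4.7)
The plan is to specialize Theorem~1 to the stochastic block model by choosing $r = k$ and exploiting the near low-rank structure of $M$. The key observation is that in an SBM with $k$ blocks, the ``true'' edge-probability matrix before zeroing out the diagonal, say $\tilde{M}$ with $\tilde{M}_{ij} = B_{c(i),c(j)}$ for all $i,j$, has rank at most $k$. Since $M$ differs from $\tilde{M}$ only on the diagonal, and the diagonal entries of $\tilde{M}$ are bounded in $[0,1]$, we have $\|M - \tilde{M}\|_F^2 = \sum_{i=1}^n \tilde{M}_{ii}^2 \leq n$. By the Eckart--Young theorem, the tail of the squared eigenvalues of $M$ beyond index $k$ satisfies
$$
\sum_{i \geq k+1} \lambda_i^2(M) \;\leq\; \|M - \tilde{M}\|_F^2 \;\leq\; n,
$$
and this bound holds pathwise in the random labels $x$, hence also in expectation.

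Next I would apply Theorem~1 with $r = k$ and $c = 1$. Under the relatively sparse assumption \prettyref{eq:sparsity_assum} and the stated choice $\tau = (1+\delta)\kappa\sqrt{n\rho}$, this yields
$$
\MSE(\hat M) \;\leq\; 16(1+\delta)^2 \left( \frac{\kappa^2 k}{n\rho} + \frac{1}{n^2\delta^2} \cdot n \right) + n^{-1}
\;=\; 16(1+\delta)^2 \frac{\kappa^2 k}{n\rho} + \frac{16(1+\delta)^2}{n\delta^2} + n^{-1}.
$$
Since $\rho \leq 1$ and $k \geq 1$, we have $k/(n\rho) \geq 1/n$, so the last two terms are dominated by a constant multiple of $k/(n\rho)$, giving a bound of the form $c''\, k/(n\rho)$ for a constant $c''$ depending only on $\kappa$ and $\delta$.

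Finally, to obtain the $\wedge 1$ in the statement, I would simply observe that the truncation step in Algorithm~1 ensures $\hat M_{ij} \in [0,1]$, and by assumption $M_{ij} \in [0,1]$, so the per-entry squared error is at most $1$ and therefore $\MSE(\hat M) \leq 1$ deterministically. Combining this trivial bound with the bound above yields $\MSE(\hat M) \leq c'' \left[k/(n\rho) \wedge 1\right]$.

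\textbf{Main obstacle.} No step is especially delicate; the only subtlety is handling the zero-diagonal of $M$, which prevents us from saying $M$ itself is exactly rank $k$. I expect the cleanest resolution to be the one above, passing to the rank-$k$ completion $\tilde M$ and paying an $O(1/n)$ approximation penalty that is absorbed into the leading $k/(n\rho)$ term. Everything else is a direct instantiation of Theorem~1.
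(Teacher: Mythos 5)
Your proposal is correct and follows essentially the same route as the paper: instantiate \prettyref{thm:mse_svt} with $r=k$ to get the $k/(n\rho)$ term and use a trivial (or $r=0$) bound for the $\wedge\, 1$ part. In fact your treatment of the zero diagonal via the rank-$k$ completion $\tilde M$ and Eckart--Young is slightly more careful than the paper's proof, which simply asserts $\lambda_i(M)=0$ for $i\ge k+1$ despite the convention $M_{ii}=0$; your extra $O(1/n)$ term is correctly absorbed into $k/(n\rho)$.
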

\begin{proof}
Under the stochastic block model, $M$ is of rank at most $k$.
Thus $\lambda_i (M) =0$ for all $i \ge k+1$. Moreover, since $M_{ij} \in [0,1]$,
it follows that $\sum_{i=1}^k \lambda_i^2(M) = \|M\|_F^2  \le n^2$. 
Applying \prettyref{thm:mse_svt} with $r=0$ and $r=k$
yields the desired result.
\end{proof}

\prettyref{thm:sbm_rate} shows that the convergence rate of MSE
of USVT is at most $  \frac{k}{n\rho} \wedge 1$, while the previous result 
in~\cite{chatterjee2015matrix} establishes that the convergence rate
is at most $\sqrt{k/n}$ for $\rho=1$. During the preparation of this manuscript, 
we became aware of an earlier arXiv preprint~\cite[Proposition 4]{klopp2017optimal} which also 
proves the error
rate of $k/(n\rho).$

The minimax optimal rate
derived in~\cite{klopp2015oracle,gao2016optimal} is
 $\left( \frac{k^2}{n^2 \rho} + \frac{ \log k}{n \rho} \right) \wedge 1$. Hence,
the error rate of USVT  is
larger than the minimax optimal rate by at most a multiplicative factor of 
$k/\log k$, which resembles the computational gap observed
for community detection~\cite{banks-etal-colt,AbbeSandon16} and the related high-dimensional statistical
inference problems discussed in~\cite{Banks16}. In particular, 
it is shown in~\cite{banks-etal-colt,AbbeSandon16} 
that estimation better than randomly guessing is attainable efficiently by
spectral methods when above the Kesten-Stigum threshold,
while it is information-theoretically possible even strictly 
below the KS threshold by a multiplicative factor $k/\log k$. 
In between the KS threshold and information-theoretic threshold, 
non-trivial estimation is 
information-theoretically possible but believed to require exponential time. 
The same conclusion also holds for exact community recovery as shown in~\cite{ChenXu14}.
Due to this coincidence, it is tempting to believe that $ \frac{k}{n\rho} \wedge 1$ might be the optimal estimation
rate among all polynomial-time algorithms; however, we do not have a proof. 



\subsection{Smooth graphon}
Next we proceed to the smooth graphon setting. We assume $\calX=[0,1)^d$ for simplicity\footnote{If
$\calX$ is a compact set in $\reals^d$, then there exists a positive constant 
$a$ such that $\calX \subset [-a,a)^d$. Hence, the general compact set case
can be reduced to $\calX=[0,1)^d$ by a proper scaling.}.
There are various notions to characterize the smoothness of graphon.
In this paper, we focus on the following two notions, which are widely
adopted in the non-parametric regression literature~\cite{Tsybakov2008}.

Given a function $g: \calX \to \reals$ and a multi-index $\kappa$, 
let \begin{align}
\nabla_{\kappa} g(x) = \frac{ \partial^{|\kappa|} g(x) }{ (\partial x)^\kappa  } \label{eq:def_partial_derivative}
\end{align}
denote its partial derivative whenever it exists. 

\begin{definition}[\Hold class] \label{def:holder}
Let $\alpha$ and $L$ be two positive numbers. 
The \Hold class $\calH(\alpha, L)$ on
$\calX$ is defined as the set of functions 
$g: \calX \to \reals$ whose partial derivatives
satisfy 
\begin{align}
\sum_{\kappa: |\kappa| = \lfloor \alpha \rfloor }
\frac{1}{\kappa!}
\left| \nabla_{\kappa} g(x)  - \nabla_{\kappa} g(x') \right| 
\le L  \|x-x'\|_\infty^{ \alpha - \lfloor \alpha \rfloor}.
\label{eq:Holdercondition}
\end{align}
\end{definition}

Note that if $\alpha \in (0,1]$, then \prettyref{eq:Holdercondition}
is equivalent to the Lip-$\alpha$ condition:
\begin{align}
| g(x) - g(x') | \le L \| x - x' \|_\infty^\alpha. \label{eq:Lip_alpha}
\end{align}

One can also measure the smoothness with respect to the
underlying measure $\mu$. This leads to the consideration of 
Sobolev space. For ease of exposition, we assume $\mu$ is the
Lebesgue measure. The main results can be extended to more general 
Borel measures.

\begin{definition}[Sobolev space]\label{def:sobolev}
Let $\alpha$ and $L$ be two positive numbers.
The Sobolev space $S(\alpha, L) $ on $(\calX,\mu)$ is defined as
the set of functions $g: \calX \to \reals$ whose partial
derivatives\footnote{More generally, the Sobolev space is defined 
when only weak derivatives exist~\cite{leoni2009first}.} satsify 
$$
\sum_{\kappa: |\kappa| = \alpha} \int_{\calX} \| \nabla_{\kappa} g(x) \|_2^2 \; \diff x \le L^2,  \quad \text{for integral } \alpha,
$$
and
$$
\sum_{\kappa: |\kappa|= \lfloor \alpha \rfloor} \int_{\calX \times \calX}  
\frac{\| \nabla_{\kappa} g(x) - \nabla_{\kappa} g(y) \|_2^2 }{ \|x-y\|_2^{2 (\alpha-\lfloor\alpha \rfloor)+d }} \; \diff x  \diff y \le L^2,  \quad \text{for non-integral } \alpha.
$$

\end{definition}


Note that the graphon $f(x,y)$ is a bi-variate function.  
We treat it as a function of $x$ for every fixed $y$,
and introduce the following two conditions on $f$.

\begin{condition}[\Hold condition on $f$]\label{cond:Hold}
There exist two positive numbers $\alpha$ and $L$ such that  $f(\cdot, y) \in \calH(\alpha,L)$ for every $y \in \calX$.
\end{condition}

\begin{condition}[Sobolev condition on $f$]\label{cond:Sob}
There exist two positive numbers $\alpha$ and $L$ such that $f(\cdot, y) \in \calS(\alpha,L(y))$ for every $y$, 
where $L(y): \calX \to \reals$ satisfies that 
$\int_{\calX} L^2(y) \diff y \le L^2$. 
\end{condition}

 The following key result shows that the eigenvalues of $\mtx{M}$ drop off to zero in a polynomial rate
 depending on the smoothness index $\alpha$ of $f.$

\begin{proposition}\label{prop:decay_rate}
Suppose that $f$ satisfies either \prettyref{cond:Hold} or \prettyref{cond:Sob}.
Then there exists a constant $C=C(\alpha, L, d)$ only depending on  $\alpha,$ $L,$ and $d$
such that  for all integers $0 \le r \le n-1$,  
$$
\frac{1}{n^2} \sum_{i \ge r+1}  \expect{  \lambda_i^2 (\mtx{M} )}  
\le C(\alpha,L, d) \left( n^{-1} + r^{- 2\alpha /d } \right).
$$
\end{proposition}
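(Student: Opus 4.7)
}

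The plan is to use the Eckart--Young--Mirsky theorem to reduce the eigenvalue tail bound to an approximation problem, and then exhibit a low-rank matrix $\tilde M$ built from a piecewise polynomial approximation of $f$. Concretely, for any matrix $\tilde M$ of rank at most $r$,
\begin{align*}
\sum_{i \ge r+1} \lambda_i^2(M) \;\le\; \|M - \tilde M\|_F^2,
\end{align*}
so it suffices to construct such a $\tilde M$ with $\mathbb{E}\|M-\tilde M\|_F^2 \le C(\alpha,L,d)\bigl(n + n^2 r^{-2\alpha/d}\bigr)$.

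The construction of $\tilde M$ goes as follows. Partition $\calX=[0,1)^d$ into $K=h^{-d}$ axis-aligned cubes $\{C_k\}$ of side $h$, with centers $z_k$. For each fixed $y$, approximate the map $x\mapsto f(x,y)$ on each cube $C_k$ by a polynomial of degree $\lfloor\alpha\rfloor$ in $x$: in the H\"older case take the Taylor polynomial of $f(\cdot,y)$ about $z_k$, and in the Sobolev case take the $L^2(C_k)$-orthogonal projection of $f(\cdot,y)$ onto polynomials of degree $\lfloor\alpha\rfloor$. Denote the resulting piecewise polynomial by $\tilde f(x,y)$ and set $\tilde M_{ij}=\tilde f(x_i,x_j)$ for $i\neq j$ and $\tilde M_{ii}=0$. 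Since $\tilde f(x,y)=\sum_{k}\sum_{|\kappa|\le\lfloor\alpha\rfloor}\1\{x\in C_k\}(x-z_k)^\kappa g_{k,\kappa}(y)$, the corresponding matrix admits a factorization with inner dimension $K\binom{\lfloor\alpha\rfloor+d}{d}$, so $\mathrm{rank}(\tilde M)\le r_0:=K\binom{\lfloor\alpha\rfloor+d}{d}$.

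Next I would bound $\mathbb{E}\|M-\tilde M\|_F^2$. In the H\"older case, Taylor's theorem together with the H\"older condition on the highest-order derivatives yields the pointwise bound $|f(x,y)-\tilde f(x,y)|\le C(\alpha,d)\,L\,h^\alpha$ uniformly in $y$, so $\|M-\tilde M\|_F^2 \le n(n-1)\,C^2 L^2 h^{2\alpha} + n\cdot O(1)$ deterministically (the extra $O(n)$ absorbs the diagonal, where $M_{ii}=0$ but $|\tilde f(x_i,x_i)|\le 1+Lh^\alpha$). In the Sobolev case, the Bramble--Hilbert / Deny--Lions lemma gives $\int_{C_k}|f(x,y)-\tilde f(x,y)|^2\,\d x\le C(\alpha,d)\,h^{2\alpha}\sum_{|\kappa|=\lfloor\alpha\rfloor}\int_{C_k}|\nabla_\kappa f(\cdot,y)|^2$ (with an analogous Gagliardo-seminorm bound for non-integer $\alpha$). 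Summing over $k$ and integrating against $\mu\otimes\mu$ using Fubini with Condition~\ref{cond:Sob} gives $\mathbb{E}[(f(x_i,x_j)-\tilde f(x_i,x_j))^2]\le C(\alpha,d)L^2 h^{2\alpha}$, hence $\mathbb{E}\|M-\tilde M\|_F^2 \le n^2 C L^2 h^{2\alpha} + O(n)$.

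Finally I would optimize. Choosing $h\asymp r^{-1/d}$ so that $r_0\le r$ turns the bound into
\begin{align*}
\frac{1}{n^2}\mathbb{E}\sum_{i\ge r+1}\lambda_i^2(M) \;\le\; C(\alpha,L,d)\bigl(r^{-2\alpha/d}+n^{-1}\bigr),
\end{align*}
which is exactly the claim. The routine work is the approximation step; the main obstacle is the Sobolev case, where I cannot use the simple Taylor-remainder identity and must invoke the Bramble--Hilbert lemma (with its fractional analogue for non-integer $\alpha$) to convert the local Sobolev seminorm into an $L^2$ approximation rate on each cube. The diagonal contribution and the reduction from $M^{\text{full}}_{ij}=f(x_i,x_j)$ to $M$ (with zero diagonal) both affect only the lower-order $n^{-1}$ term, via Weyl's inequality applied to the rank-$n$ perturbation $\mathrm{diag}(M^{\text{full}})$ together with $\sum_i f(x_i,x_i)^2\le n$.
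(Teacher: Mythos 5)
Your plan follows essentially the same route as the paper: reduce the tail of eigenvalues to a rank-constrained Frobenius approximation, build the low-rank matrix from a piecewise degree-$\lfloor\alpha\rfloor$ polynomial approximation of $f(\cdot,y)$ (Taylor in the \Hold case, local $L^2$ projection in the Sobolev case, where your Bramble--Hilbert/Deny--Lions citation plays the role of the paper's Birman--Solomjak lemma), count the rank as $k^d$ times the number of monomials, and take $h\asymp r^{-1/d}$. The off-diagonal analysis in both cases is correct and matches the paper.

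There is, however, one genuine gap, concentrated on the diagonal, together with a related internal inconsistency. First, you set $\tilde M_{ii}=0$ and simultaneously claim $\mathrm{rank}(\tilde M)\le K\binom{\lfloor\alpha\rfloor+d}{d}$; zeroing the diagonal is itself a perturbation of rank up to $n$, so you must keep $\tilde M_{ii}=\tilde f(x_i,x_i)$ to preserve the rank bound and instead pay for the diagonal inside $\|M-\tilde M\|_F^2$ (your \Hold error bound implicitly does exactly this, so there the slip is harmless, since Taylor approximation gives the pointwise bound $|\tilde f(x_i,x_i)|\le 1+Lh^\alpha$). Second, and more substantively, in the Sobolev case nothing in your argument controls $\tilde f(x_i,x_i)$: the $L^2(C_k)$ projection gives no pointwise bound at the diagonal point $(x,x)$, and the mechanism you invoke at the end --- Weyl's inequality for the rank-$n$ perturbation $\mathrm{diag}(M^{\rm full})$ together with $\sum_i f(x_i,x_i)^2\le n$ --- bounds the diagonal of $f$, not of its projection, and Weyl's inequality says nothing useful about a rank-$n$ perturbation anyway. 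The paper closes exactly this point by expanding $\left(\calP_{I_j,\lfloor\alpha\rfloor}\right)f(\cdot,y)$ in an orthonormal polynomial basis of $L^2(I_j)$ and using Cauchy--Schwarz with $0\le f\le 1$ to show $\expect{|\left(\calP_{I_j,\lfloor\alpha\rfloor}\right)f(x,x)|^2\,\1\{x\in I_j\}}\le C_0^2 k^{-d}$, so the diagonal contributes only $O(C_0^2 n)$ to the squared Frobenius error; equivalently, one can argue that a degree-$\lfloor\alpha\rfloor$ polynomial $p$ on a cube of side $h$ satisfies $\|p\|_\infty\le C(\alpha,d)h^{-d/2}\|p\|_{L^2}$ while $\|\calP f(\cdot,y)\|_{L^2(C_k)}\le h^{d/2}$, so the projection is uniformly bounded by a constant depending only on $\alpha,d$. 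Adding this step (and keeping the polynomial values on the diagonal of $\tilde M$) makes your proof complete.
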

\begin{remark}
In the special case where $f$ is \Hold smooth with $\alpha=1$,
 \prettyref{prop:decay_rate} has been proved in~\cite{chatterjee2015matrix}.
In particular,  it is shown in~\cite{chatterjee2015matrix} that $f$ can be well-approximated by a piecewise constant function. 
As a consequence, $M$ can be approximated by 
a rank-$r$ block matrix with $r^2$ blocks, and the entry-wise approximation error in
the squared Frobenius norm is  shown to be approximately $ r^{-2\alpha /d}$. 
The same idea can be readily extended to the case $\alpha \in [0,1]$. However,
piecewise constant approximations of $f$ 
no longer suffice for $\alpha>1$, because \Hold smoothness condition \prettyref{eq:Holdercondition} 
no longer implies Lip-$\alpha$ condition \prettyref{eq:Lip_alpha}. In fact \prettyref{eq:Lip_alpha}
with $\alpha>1$ will imply that $f \equiv C$ for some constant $C$. 
Instead, we show that
$f$ can be well approximated by 
piecewise polynomials of degree $\lfloor \alpha \rfloor$. 
\end{remark}


By combining \prettyref{prop:decay_rate}
with \prettyref{cor:main}, we immediately get the following result
on the convergence rate of the estimation error of USVT.

\begin{theorem}\label{thm:smooth_graphon}
Under the graphon estimation model, assume \prettyref{eq:sparsity_assum} holds,
and $f$ satisfies either \prettyref{cond:Hold} or \prettyref{cond:Sob}.
There exists a positive constant $\kappa$ such that if $\tau = (1+\delta) 
\kappa \sqrt{n\rho}$ for some fixed constant $\delta>0$,
then 
$$ 
\MSE(\hat{M})  \le    c'' (n \rho )^{  - \frac{2 \alpha }{ 2\alpha+d } },
$$
where $c''$ is a positive constant  independent of $n.$
\end{theorem}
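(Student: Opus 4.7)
The plan is to combine \prettyref{prop:decay_rate} with \prettyref{cor:main} in a direct manner: the proposition furnishes the polynomial decay of the eigenvalues of $\mtx{M}$, and the corollary translates that decay into a convergence rate for USVT.

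First I would observe that \prettyref{prop:decay_rate} says that under either \prettyref{cond:Hold} or \prettyref{cond:Sob},
\[
\frac{1}{n^2}\sum_{i\ge r+1}\expect{\lambda_i^2(\mtx{M})} \;\le\; C(\alpha,L,d)\pth{\,n^{-1} + r^{-2\alpha/d}\,}
\quad\text{for all } 0\le r\le n-1.
\]
This is exactly the definition of a polynomial decay of the eigenvalues of $M$ with rate $\beta = 2\alpha/d$, with constants $c_0=c_1=C(\alpha,L,d)$.

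Next I would invoke \prettyref{cor:main} under the sparsity hypothesis \prettyref{eq:sparsity_assum}: for the stated choice $\tau=(1+\delta)\kappa\sqrt{n\rho}$, the corollary yields
\[
\MSE(\hat M) \;\le\; c'\,(n\rho)^{-\beta/(\beta+1)}.
\]
Substituting $\beta=2\alpha/d$ gives $\beta/(\beta+1) = (2\alpha/d)/(2\alpha/d + 1) = 2\alpha/(2\alpha+d)$, which is the exponent claimed in the statement. For bookkeeping, the additive $n^{-c}$ from \prettyref{thm:mse_svt} and the residual $c_1 n^{-1}$ hidden inside the polynomial-decay definition are both dominated by $(n\rho)^{-2\alpha/(2\alpha+d)}$ whenever $n\rho\ge C\log n$, since $\rho\le 1$ forces $n\rho\le n$ and the exponent $2\alpha/(2\alpha+d)<1$. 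Absorbing the resulting constants into a single $c''$ depending on $\alpha,L,d,\kappa,\delta$ yields the bound of the theorem.

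There is no real obstacle at this stage; the substantive work has already been carried out in \prettyref{thm:mse_svt} (deterministic eigenvalue tail control via the spectral norm $\|A-\expect{A}\|\lesssim\sqrt{n\rho}$) and in \prettyref{prop:decay_rate} (piecewise polynomial approximation of $f$ to derive the decay rate). The only minor point to verify is the correct choice of $r$ inside the minimization in \prettyref{thm:mse_svt}, namely $r=\lfloor(n\rho)^{d/(2\alpha+d)}\rfloor$, which matches $r=\lfloor(n\rho)^{1/(\beta+1)}\rfloor$ used in the proof of \prettyref{cor:main} and equates the two summands $r/(n\rho)$ and $r^{-2\alpha/d}$ up to constants; this is what produces the announced rate.
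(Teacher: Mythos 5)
Your proposal is correct and follows exactly the paper's route: the theorem is obtained by noting that \prettyref{prop:decay_rate} establishes polynomial eigenvalue decay with rate $\beta=2\alpha/d$ and then applying \prettyref{cor:main}, which gives the exponent $\beta/(\beta+1)=2\alpha/(2\alpha+d)$. Your additional bookkeeping about absorbing the $n^{-c}$ and $n^{-1}$ residual terms is consistent with the paper's argument and requires no changes.
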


\prettyref{thm:smooth_graphon} implies that
if $f$ is infinitely many times differentiable, then the
MSE of USVT converges to zero faster than $(n\rho)^{-1+\epsilon}$
for an arbitrarily small constant $\epsilon>0.$
In fact, we can prove a sharper result when $f$ is analytic, \ie, $f$ is infinitely
differentiable and its Taylor series expansion around any point in its domain
converges to the function in some neighborhood of the point.

\begin{theorem}\label{thm:analytic_graphon}
Under the graphon estimation model, suppose there there exists positive constants $a$ and $b$ 
such that for all multi-indices $\kappa$ and all $y \in \calX$
\begin{align}
\sup_{x \in \calX} 
\frac{\partial^{|\kappa|} f(x,y)}{ (\partial x)^{\kappa}} \le b a^{|\kappa|} \kappa!.
\label{eq:analytic_cond}
\end{align}
There exists positive constants $c_0$ and $c_1$ only depending on $a, b, d$ such that for all integers $0 \le r \le n-1,$ 
\begin{align}
\frac{1}{n^2} \sum_{i \ge r+1}   \lambda_i^2 (\mtx{M} ) 
\le c_1 \left(  n^{-1} +   \exp\left( - c_0 r^{1/d} \right) \right).
\label{eq:analytic_eigen}
\end{align}
Moreover, assume \prettyref{eq:sparsity_assum} holds. 
Then there exists positive constants $c',c''$ such that if $\tau = c'' \sqrt{n\rho}$,
$$ 
\MSE(\hat{M})  \le  c'   \frac{\log^{d} \left(n\rho \right) }{n\rho}.
$$
\end{theorem}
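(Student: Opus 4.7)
The plan is to first establish the eigenvalue decay bound \prettyref{eq:analytic_eigen} via a high-degree piecewise polynomial approximation, and then invoke the super-polynomial-decay case of \prettyref{cor:main} with $\alpha = 1/d$. Since \prettyref{eq:analytic_eigen} is a pathwise (deterministic) bound, it trivially holds in expectation, so \prettyref{cor:main} directly yields
$\MSE(\hat M) \le c' \log^{1/\alpha}(n\rho)/(n\rho) = c' \log^d(n\rho)/(n\rho)$,
matching the claim. The entire work is therefore concentrated in proving \prettyref{eq:analytic_eigen}.

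To establish \prettyref{eq:analytic_eigen}, I would generalize the piecewise polynomial construction used to prove \prettyref{prop:decay_rate} by letting the polynomial degree grow with $r$. Fix an integer $m \ge 2a$ so that $a/m \le 1/2$, and partition $\calX = [0,1)^d$ into $m^d$ sub-cubes $\{B_j\}$ of side $1/m$ with centers $\{c_j\}$. For each $B_j$ and each $y \in \calX$, approximate $f(\cdot, y)|_{B_j}$ by its degree-$(\ell - 1)$ Taylor polynomial around $c_j$:
$$
g(x, y) \;=\; \sum_{j=1}^{m^d} \1\{x \in B_j\} \sum_{|\kappa| \le \ell - 1} \frac{\nabla_\kappa f(c_j, y)}{\kappa!}\, (x - c_j)^\kappa.
$$
The Lagrange form of the multivariate Taylor remainder combined with \prettyref{eq:analytic_cond} gives
$$
\sup_{x, y \in \calX} |f(x, y) - g(x, y)| \;\le\; b \binom{\ell + d - 1}{d - 1}\left(\frac{a}{m}\right)^\ell \;\le\; C_d\,(\ell + d)^{d - 1}\, 2^{-\ell}
$$
for a constant $C_d$ depending only on $a, b, d$. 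Setting $\tilde M_{ip} := g(x_i, x_p)$, each summand in the expansion contributes a rank-one matrix---one factor depends only on the row index through $\1\{x_i \in B_j\}(x_i - c_j)^\kappa$, the other only on the column index through $\nabla_\kappa f(c_j, x_p)/\kappa!$---so $\mathrm{rank}(\tilde M) \le m^d \binom{\ell - 1 + d}{d} = O(\ell^d)$.

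For a prescribed $r$, choose $\ell$ as the largest integer with $m^d \binom{\ell - 1 + d}{d} \le r$; this yields $\ell \ge c_3 r^{1/d}$ for some $c_3 > 0$ once $r$ exceeds a threshold $r_0 = r_0(a, d)$. The Eckart--Young theorem applied to the symmetric matrix $M$ and the rank-$\le r$ matrix $\tilde M$ then gives
$$
\sum_{i \ge r + 1} \lambda_i^2(M) \;\le\; \|M - \tilde M\|_F^2 \;\le\; n^2 \sup_{x, y}|f(x, y) - g(x, y)|^2 + n\|g\|_\infty^2,
$$
where the $n\|g\|_\infty^2 = O(n)$ term accounts for the diagonal mismatch ($M_{ii} = 0$ by convention, while $\tilde M_{ii}$ need not be). Dividing by $n^2$ and substituting the Taylor bound produces $\frac{1}{n^2}\sum_{i \ge r+1}\lambda_i^2(M) \le C_d^2 (\ell + d)^{2(d - 1)} 4^{-\ell} + O(n^{-1})$, which after absorbing the polynomial factor in $\ell$ into the exponential is dominated by $c_1(e^{-c_0 r^{1/d}} + n^{-1})$ for suitable $c_0, c_1 > 0$. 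For $r < r_0$ the bound holds trivially via $n^{-2}\|M\|_F^2 \le 1$ upon enlarging $c_1$, which yields \prettyref{eq:analytic_eigen}.

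The main technical difficulty is choosing the subdivision parameter $m$ as a \emph{constant} independent of $\ell, r, n$: with $m$ bounded, each sub-cube incurs only a constant multiplicative cost in the rank while the Taylor degree $\ell$ can be driven to infinity, extracting the geometric factor $2^{-\ell}$ from analyticity. A piecewise-constant or fixed low-degree approximation as in \prettyref{prop:decay_rate} would instead require $m \sim r^{1/d}$ and deliver only polynomial tail decay in $r$, which is insufficient to reach the target $\log^d(n\rho)/(n\rho)$ rate.
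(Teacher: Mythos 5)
Your proposal is correct and follows essentially the same route as the paper: a fixed, constant-size partition (your $m\ge 2a$ versus the paper's $k=\lceil ea\rceil$), Taylor polynomials of degree $\ell \asymp r^{1/d}$ giving a rank-$O(\ell^d)$ approximant, a Frobenius-norm bound with separate treatment of the zeroed diagonal, a trivial bound for $r$ below a threshold, and then the super-polynomial-decay case of Corollary~\ref{cor:main} with $\alpha=1/d$. The only differences are cosmetic (explicit invocation of Eckart--Young and slightly different constants), so nothing further is needed.
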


We remark that for a fixed $y \in \calX$, \prettyref{eq:analytic_cond} is a
sufficient and necessary condition for $f(\cdot,y)$ being analytic~\cite{komatsu1960characterization}.
Note that \prettyref{eq:analytic_eigen} implies the eigenvalues of 
$M$ has a super-polynomial decay with rate $\alpha=1/d$. 
Its proof is based on approximating $f(\cdot,y)$ using its Taylor series truncated
at degree $\ell \asymp r^{1/d}$. 
When $d=1$, the eigenvalues of $M$ decays to zero exponentially fast in $r$; such an exponentialy decay can be also  proved
via Chebyshev polynomial approximation of $f$ as shown in~\cite{little1984eigenvalues}.

\subsubsection{Comparison to minimax optimal rates}\label{sec:comparison}
In this section, we compare the rates of convergence of USVT for 
estimating \Hold smooth graphons to the minimax optimal rates when
the dimension of latent feature space $d=1$. 
In the dense regimes with $\rho=1$, the minimax rates of
estimating \Hold smooth graphons have been derived in~\cite{gao2015rate}:
\begin{align*}
\inf_{\hat{\mtx{M}}} \sup_{f \in \calH(\alpha, L) } \sup_{\mu \in \calP[0,1]} 
\MSE(\hat{\mtx{M}}) 
\asymp 
\begin{cases}
 n^{ -2\alpha / (\alpha+1) } , &  0 < \alpha <1 \\
 \frac{ \log n}{n } , & \alpha \ge 1,
\end{cases}
\end{align*}
where $\calP[0,1]$ denotes all probability distributions supported over $[0,1].$
The results have been extended by~\cite{klopp2015oracle} to sparse
regimes where $\rho \to 0$ as $n \to \infty.$ However, the minimax
result derived in~\cite{klopp2015oracle}  contains minor errors. In
particular, it is claimed that that the minimax rate is always lower 
bounded by $\frac{\log n}{n\rho}$. However, as we shown in \prettyref{thm:smooth_graphon}, when $d=1$, 
the error rate of
USVT for estimating $\alpha$-smooth graphon is at most $(n\rho)^{-2\alpha/(2\alpha+1)}$,
which strictly improves over $\log n/(n\rho)$ when $n \rho \ll (\log n)^{2\alpha+1}$.
Tracing the derivations in~\cite{klopp2015oracle}, we find that
the correct minimax optimal rate is given by 
\begin{align}
\inf_{\hat{\mtx{M}}} \sup_{f \in \calH(\alpha, L) } \sup_{\mu \in \calP[0,1]} 
\MSE(\hat{\mtx{M}}) 
\asymp 
\begin{cases}
1, & n \rho = O(1) \\
 \frac{ \log (n \rho) }{n\rho}, & \omega(1) \le \log( n \rho) \le \alpha \log n  
 +(\alpha+1) \log \log n \\
(n^2 \rho)^{-\alpha/(\alpha+1)}, & \log (n\rho ) \ge \alpha \log n  + (\alpha+1) \log \log n
\end{cases}
\label{eq:optimal_rate},
\end{align}
see \prettyref{app:optimal_rate} for the derivation.
Thus, as graphon gets smoother, \ie, $\alpha$ increases, the 
upper bound to the rate of convergence of USVT $(n\rho)^{-2\alpha/(2\alpha+1)}$ approaches 
the minimax optimal rate $\log (n\rho)/(n\rho)$.

\subsection{Connections to spectrum of integral operators}
 In this section, we state a useful result, connecting the eigenvalues of 
 $\mtx{M}$ to the spectrum of an integral operator defined in terms of $f.$ 
 This allows us
 to translate existing results on the decay rates of eigenvalues of integral operators to those of $\mtx{M}.$

Define an operator $\calT: L^2(\calX, \mu) \to L^2 (\calX, \mu)$ as
\begin{equation}\label{eq:operator}
\left(\calT g \right)(x) \triangleq \int_{\mathcal{X}} f(x,y) g (y) \mu ( \diff y), 	\quad  \forall g \in L^2 (\calX, \mu).
\end{equation}
where $f$ acts as a kernal function. Hence, $\mtx{M}$ can
be also viewed as a kernal matrix. 
We assume that the graphon $f$ is square-integrable, \ie,
$
\int_{\calX \times \calX} f^2(x,y ) \mu( \diff x) \mu( \diff y)< \infty. 
$
In this case, the operator $\calT$ is known as Hilbert-Schmidt integral operator,
which is compact. Therefore it admits a discrete spectrum with finite multiplicity of all of its non-zero eigenvalues (see e.g.\ \cite{kato66,koltchinskii98,vonluxburg-bousquet-belkin}). 
Moreover, any of its eigenfunctions is continuous on $\mathcal{X}$. 
Denote the eigenvalues of operator $\calT$ sorted in decreasing order by $|\lambda_1 (\calT) | \ge |\lambda_2 (\calT) | \ge \cdots$ and 
its corresponding eigenfunctions with unit $L^2(\calX, \mu)$ norm by $\phi_1, \phi_2, \cdots$. 
By the definition of $\lambda_k$ and $\phi_k$, we
have 
\begin{align}
\lim_{m \to \infty} 
\int_{\calX \times \calX} 
\left( f(x,y) - \sum_{k=1}^m \lambda_k (\calT)  \phi_k(x) \phi_k(y) \right)^2 
\mu( \diff x) \mu (\diff y) = 0, \label{eq:kernalapprox}
\end{align}
see, \eg, ~\cite[Chapter Five, Section 2.4]{kato66}. 

The following theorem upper bounds the tail of eigenvalues of $\mtx{M}$ in expectation
using the tail of eigenvalues of $\calT$. Previous results in~\cite{koltchinskii2000random} provide similar upper bounds 
to the $\ell_2$ distance between the ordered eigenvalues of $\mtx{M}$ and 
those of $\calT$.

\begin{theorem}\label{thm:operator}
For any integer $r \ge 0$, 
\begin{align}
\frac{1}{n^2} \sum_{k \ge r+1}  \expect{ \lambda_k^2 (\mtx{M}) } \le 
\sum_{k=r+1}^\infty \lambda_k^2(\calT) +
\frac{1}{n}  \sum_{k=1}^r \sum_{\ell=1}^r \lambda_k(\calT) \lambda_\ell(\calT) 
\expect{\phi_k^2 (x_1) \phi_\ell^2(x_1) }. \label{eq:bound_integral}
\end{align}
\end{theorem}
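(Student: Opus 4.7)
The plan is to build a deterministic rank-$r$ approximation of $\mtx{M}$ from the truncated spectral expansion of $f$ and then invoke the Eckart--Young--Mirsky theorem. Define the truncated kernel
$$f_r(x,y) := \sum_{k=1}^r \lambda_k(\calT)\, \phi_k(x) \phi_k(y)$$
and the random matrix $N^{(r)} \in \reals^{n \times n}$ by $N^{(r)}_{ij} := f_r(x_i, x_j)$ for \emph{all} indices $i, j$ (the diagonal included, unlike $\mtx{M}$). Since $N^{(r)} = \sum_{k=1}^r \lambda_k(\calT)\, v_k v_k^\top$ with $v_k := (\phi_k(x_1),\ldots,\phi_k(x_n))^\top$, the rank of $N^{(r)}$ is at most $r$.

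Because $\mtx{M}$ is symmetric, its singular values coincide with $|\lambda_k(\mtx{M})|$, and the Eckart--Young--Mirsky theorem gives, for every matrix $N$ of rank at most $r$,
$$\sum_{k \ge r+1} \lambda_k^2(\mtx{M}) \;\le\; \|\mtx{M} - N\|_F^2.$$
Taking $N = N^{(r)}$ reduces the theorem to estimating $\expect{\|\mtx{M} - N^{(r)}\|_F^2}$, which I would evaluate by splitting the off-diagonal and diagonal contributions. For $i \neq j$, the pair $(x_i, x_j)$ is distributed as $\mu \otimes \mu$, so using \prettyref{eq:kernalapprox} together with the orthonormality of $\{\phi_k \otimes \phi_k\}$ in $L^2(\calX \times \calX, \mu \otimes \mu)$ (Parseval),
$$\expect{(M_{ij} - N^{(r)}_{ij})^2} \;=\; \int_{\calX \times \calX} (f - f_r)^2 \, \mu(\diff x) \mu(\diff y) \;=\; \sum_{k \ge r+1} \lambda_k^2(\calT).$$
For $i = j$ the convention $M_{ii} = 0$ forces $M_{ii} - N^{(r)}_{ii} = -\sum_{k=1}^r \lambda_k(\calT) \phi_k^2(x_i)$, and expanding the square yields
$$\expect{(M_{ii} - N^{(r)}_{ii})^2} \;=\; \sum_{k,\ell=1}^r \lambda_k(\calT) \lambda_\ell(\calT)\, \expect{\phi_k^2(x_1) \phi_\ell^2(x_1)}.$$

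Summing the $n(n-1)$ off-diagonal contributions and the $n$ diagonal ones, dividing by $n^2$, and using $n(n-1) \le n^2$ produces exactly \prettyref{eq:bound_integral}. There is no real obstacle here; the only subtlety is the diagonal correction term, which is precisely the reason one cannot obtain the cleaner bound $\sum_{k \ge r+1} \lambda_k^2(\calT)$. In applications this correction is dominated by the leading term because of the extra $1/n$ factor.
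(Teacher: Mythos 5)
Your proposal is correct and follows essentially the same route as the paper: the same rank-$r$ matrix $N_{ij}=\sum_{k=1}^r\lambda_k(\calT)\phi_k(x_i)\phi_k(x_j)$, the Eckart--Young bound $\sum_{k\ge r+1}\lambda_k^2(\mtx{M})\le\|\mtx{M}-N\|_F^2$, and the same diagonal/off-diagonal split with the identical diagonal computation. The only cosmetic difference is that you evaluate the off-diagonal term exactly via Parseval, whereas the paper bounds it from above using Minkowski's inequality and the $L^2$ convergence in \prettyref{eq:kernalapprox}; both yield the same term $\sum_{k\ge r+1}\lambda_k^2(\calT)$.
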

The second term on the right hand side of \prettyref{eq:bound_integral} is responsible for the contribution of the diagonal entries of $M$. When 
$\expect{\phi_k^2 (x_1) \phi_\ell^2(x_1) }$ is bounded  and $\sum_{k=1}^\infty \lambda_k(\calT) <\infty$,  this second term is on the order of $n^{-1}.$

It is well known that if the 
kernel function $f$ is smoother, the eigenvalues of $\calT$
drops to zero faster. There is vast literature on 
estimating the decay rates of the eigenvalues of $\calT$ in terms of
the smoothness conditions of $f$, see, \eg, ~\cite{krein1965introduction,Birman1977estimates, konig2013eigenvalue,delgado2014schatten}. 
\prettyref{thm:operator} allows us to translate those existing results on the decay rates of eigenvalues of $\calT$
to those of $M$, as illustrated by examples in \prettyref{sec:numerical}.

\section{Proofs}

\subsection{Proof of \prettyref{thm:mse_svt}}

We need two key auxiliary lemmas. 
The first one gives a deterministic upper bound to the 
estimation error $\| \hat{A} - \expect{A} \|_F$
in terms of  the spectral norm $\|A-\expect{A}\|$ and the eigenvalues of $M$. 
The second one is probabilistic, providing a high-probability upper bound
to the spectral norm $\|A-\expect{A}\|$.

\begin{lemma}\label{lmm:svt}
Given two $n \times m$ real matrices  $A$ and $B$,
suppose $ \tau \ge (1+\delta) \| A - B \|$ for some fixed constant $\delta>0$ and
let $A=\sum_{i=1}^n s_i(A) u_i v_i^\top$ denote its  singular value decomposition. 
For both
$$
\hat{A} = \sum_{i: s_i(A) >\tau} s_i(A) \; u_i v_i^\top 
\quad \text{ and } \quad
\hat{A} = \sum_{i: s_i(A) >\tau} \left(s_i (A)-\tau \right) u_i v_i^\top,
$$
we have that 
$$
\| \hat{A}  - B \|_F ^2 \le  
16 \min_{0 \le r \le n} \left(  \tau^2  r + 
\left( \frac{1+\delta}{\delta} \right)^2 \sum_{i \ge r+1}^n  s_i^2 (B) \right),
 $$
where $ s_1 (B) \ge s_2(B) \ge \cdots \ge s_n(B)$ are 
the singular values of $B$.  
\end{lemma}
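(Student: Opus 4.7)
The plan is to prove an oracle-type inequality by fixing an arbitrary $r \in \{0,\ldots,n\}$, bounding $\|\hat A - B\|_F^2$ pointwise by $16\bigl(\tau^2 r + ((1+\delta)/\delta)^2 \sum_{i > r} s_i^2(B)\bigr)$, and then taking the minimum. Let $B_r$ denote the best rank-$r$ Frobenius approximation of $B$, so that $\|B - B_r\|_F^2 = \sum_{i > r} s_i^2(B)$ and $\|B - B_r\| = s_{r+1}(B)$ by Eckart--Young. The argument combines three ingredients: the Frobenius triangle inequality with $B_r$ as pivot, the rank--operator norm inequality $\|X\|_F^2 \le \mathrm{rank}(X) \|X\|^2$, and Weyl's inequality to control the rank $k_0 := |\{i : s_i(A) > \tau\}|$ of $\hat A$.

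First, Frobenius triangle gives $\|\hat A - B\|_F^2 \le 2\|\hat A - B_r\|_F^2 + 2\sum_{i > r} s_i^2(B)$, and since $\hat A - B_r$ has rank at most $k_0 + r$ we get $\|\hat A - B_r\|_F^2 \le (k_0 + r) \|\hat A - B_r\|^2$. Operator-norm triangle then yields
\[
\|\hat A - B_r\| \le \|\hat A - A\| + \|A - B\| + \|B - B_r\| \le \tau + \tau/(1+\delta) + s_{r+1}(B),
\]
using $\|\hat A - A\| \le \tau$ for both hard and soft thresholding (the discarded singular values are $\le \tau$, and soft thresholding shrinks retained ones by exactly $\tau$) together with the hypothesis $\|A - B\| \le \tau/(1+\delta)$. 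Next, Weyl's inequality $|s_i(A) - s_i(B)| \le \tau/(1+\delta)$ shows that $s_i(B) > \tau\delta/(1+\delta)$ for every $i \le k_0$, so when $k_0 > r$ one has $\sum_{i > r} s_i^2(B) \ge (k_0 - r) \tau^2 \delta^2/(1+\delta)^2$, yielding
\[
k_0 + r \le 2r + \left(\tfrac{1+\delta}{\delta}\right)^{\!2} \tau^{-2} \sum_{i > r} s_i^2(B).
\]

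To obtain the sharp constant $16$, I would exploit the monotonicity observation that the target right-hand side $\tau^2 r + C \sum_{i > r} s_i^2(B)$, with $C = ((1+\delta)/\delta)^2 \ge 1$, is non-increasing in $r$ as long as $s_{r+1}(B) > \tau/\sqrt{C} = \tau\delta/(1+\delta)$ (since incrementing $r$ removes $C s_{r+1}^2(B) > \tau^2$ and adds $\tau^2$). Hence the minimum is attained at some $r$ with $s_{r+1}(B) \le \tau\delta/(1+\delta)$, and it suffices to prove the bound for such $r$; for these values, $\|\hat A - B_r\| \le 2\tau$ exactly, so $\|\hat A - B_r\|_F^2 \le 4\tau^2(k_0+r) \le 8\tau^2 r + 4C\sum_{i > r} s_i^2(B)$, and the outer triangle yields $\|\hat A - B\|_F^2 \le 16\tau^2 r + (8C + 2)\sum_{i > r} s_i^2(B) \le 16\bigl(\tau^2 r + C \sum_{i > r} s_i^2(B)\bigr)$ since $C \ge 1$ implies $8C + 2 \le 16C$. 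The main obstacle is precisely this constant tracking: without the preliminary reduction to $r$ with $s_{r+1}(B) \le \tau\delta/(1+\delta)$ (which yields the tight identity $\|\hat A - B_r\| \le 2\tau$), the naive estimate produces a constant substantially larger than $16$. The hard- and soft-thresholded estimators are treated uniformly since both satisfy $\|\hat A - A\| \le \tau$.
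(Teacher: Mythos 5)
Your proof is correct and takes essentially the same route as the paper's: both pivot through a best rank-$r$ approximation of $B$, combine the triangle inequality with the bound $\|\hat{A}-B_r\|_F^2 \le \mathrm{rank}(\hat{A}-B_r)\,\|\hat{A}-B_r\|^2$ and $\|\hat{A}-B_r\|\le 2\tau$, and use Weyl's inequality at the threshold $\delta\tau/(1+\delta)$, with your reduction to indices $r$ satisfying $s_{r+1}(B)\le \delta\tau/(1+\delta)$ playing exactly the role of the paper's special index $\ell$ and its final comparison over $r$. One small redundancy: in that restricted range your own Weyl step already gives $k_0\le r$, so the tail correction in $k_0+r\le 2r+\left(\frac{1+\delta}{\delta}\right)^2\tau^{-2}\sum_{i>r}s_i^2(B)$ is not actually needed, but this does not affect correctness.
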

\prettyref{lmm:svt} without explicit constants is  
proved in \cite[Lemma 3]{shah2016stochastically},
which improves on the previous result in~\cite[Lemma 3.5]{chatterjee2015matrix}.
\prettyref{lmm:svt} with slightly different constants
is proved in \cite[Theorem 1]{koltchinskii2011nuclear} for soft singular value thresholding
and in \cite[Theorem 2]{klopp2011rank} for hard singular value thresholding. Here we provide a 
short proof for completeness.

\begin{proof}
Define an integer $\ell$ as
$$
\ell = \sup \left\{ 1 \le i \le n: s_i(B) \ge \frac{\delta}{1+\delta} \tau \right\}
$$
and set $\ell=0$ by default if the above supreme is taken over the empty set.
We claim that $\hat{A}$ is of rank at most $\ell.$ Indeed, if $\ell=n,$ the claim
holds trivially. Otherwise, $s_{\ell+1} (B) < \delta \tau /(1+\delta).$ By Weyl's perturbation 
theorem and the assumption that $\tau \ge (1+\delta)\|A-B\|,$
$$
s_{\ell+1} (A) \le s_{\ell+1} (B) + \|A- B\| < \frac{\delta}{1+\delta} \tau + \frac{1}{1+\delta} \tau = \tau,
$$
and hence $\hat{A}$ is of rank at most $\ell$ by the definition of $\hat{A}.$
Let $B_\ell$ denote the best rank-$\ell$ approximation of $B.$ Then by triangle's inequality
$$
\| \hat{A} - B\|_F \le \| \hat{A} - B_\ell \|_F + \| B- B_\ell\|_F
$$
and thus
\begin{align*}
\| \hat{A} - B\|_F^2 & \le 2 \| \hat{A} - B_\ell \|_F^2  + 2 \| B- B_\ell\|_F^2 \\
& \le 4 \ell \| \hat{A} - B_\ell \|^2 + 2 \sum_{i \ge \ell+1} s_i^2(B),
\end{align*}
where the last inequality holds because $\hat{A}-B_\ell$ is of rank at most $2\ell.$
By triangle's inequality again and the fact that 
$\| \hat{A}- A \| \le \tau$, we have that 
$$
\| \hat{A} - B_\ell \| \le \| \hat{A} - A \| + \|A-B\| + \|B-B_\ell\|
\le \tau + \frac{1}{1+\delta} \tau + \frac{\delta}{1+\delta} \tau = 2\tau.
$$
Combining the last two displayed equaitons yields that 
$$
\| \hat{A} - B\|_F^2  \le 16 \ell \tau^2 + 2 \sum_{i \ge \ell+1} s_i^2(B)
\le 16 \left( \ell \tau^2 + \left( \frac{1+\delta}{\delta} \right)^2 \sum_{i \ge \ell+1} s_i^2(B)   \right).
$$
Finally, to complete the proof, note that by the definition of $\ell$, 
 for all $ 0 \le r \le n$, 
$$
\ell \tau^2 + \left( \frac{1+\delta}{\delta} \right)^2 \sum_{i \ge \ell+1} s_i^2(B)  
\le  \tau^2  r + 
\left( \frac{1+\delta}{\delta} \right)^2 \sum_{i \ge r+1}^n  s_i^2 (B). 
$$

\end{proof}

Lemma~\ref{lmm:concentration_spectral} initially developed by~\cite{Feige05} and extended by~\cite{tomozei2014,chatterjee2015matrix,HajekWuXuSDP14,BVH14}, gives upper bounds to the spectral norm of random symmetric matrices with bounded entries. 
\begin{lemma}\label{lmm:concentration_spectral}
 Let $A$ denote a symmetric and zero-diagonal random matrix, where the entries $\{A_{ij}: i<j\}$ are independent and $[0,1]$-valued..
 Assume that $\mathbb{E}[A_{ij}]  \le \rho$ for some $\rho>0$. If  \prettyref{eq:sparsity_assum} holds, \ie, 
 $n \rho \ge C \log n$ for a constant $C$, then for all $c>0$ there exists a constant $\kappa>0$ such that with probability at least $1-n^{-c}$, 
\begin{equation}
\| A -\mathbb{E}[A] \| \le \kappa \sqrt{n \rho}.
\end{equation}
\end{lemma}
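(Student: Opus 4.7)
The plan is to use the classical trace moment method from random matrix theory, in the spirit of F\"uredi--Koml\'os with the refinements of Vu and Feige--Ofek that are well-suited to symmetric matrices with independent bounded entries in the relatively dense regime $n\rho \ge C\log n$. The starting point is the elementary inequality $\|A-\mathbb{E}[A]\|^{2k} \le \Tr\bigl((A-\mathbb{E}[A])^{2k}\bigr)$, valid for any positive integer $k$, combined with Markov's inequality; this reduces the problem to controlling a single high moment of the centered matrix.

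Expanding the trace yields
\begin{align*}
\mathbb{E}\,\Tr\bigl((A-\mathbb{E}[A])^{2k}\bigr)= \sum_{(i_0,\ldots,i_{2k-1})} \mathbb{E}\prod_{s=0}^{2k-1} (A-\mathbb{E}[A])_{i_s i_{s+1}},
\end{align*}
with indices identified cyclically. Independence and the zero-mean structure imply that only closed walks in which every edge is traversed at least twice contribute nontrivially; in particular the number $m$ of distinct edges used must satisfy $m\le k$, and each surviving walk contributes at most $\rho^m$ to the sum because the entries lie in $[0,1]$ with mean at most $\rho$. The combinatorial core is to bound the number of closed length-$2k$ walks on $[n]$ using exactly $m$ distinct edges (each of multiplicity $\ge 2$) by $n\cdot n^m$ times a Catalan-type factor of order $(Ck)^{2(k-m)}$. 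Summing over $m\le k$ then gives the clean estimate $\mathbb{E}\,\Tr\bigl((A-\mathbb{E}[A])^{2k}\bigr) \le n\,(C_0\, n\rho)^k$ for a universal constant $C_0$, provided $k\lesssim \sqrt{n\rho}$.

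To conclude, I would take $k=\lceil c'\log n\rceil$ with $c'$ sufficiently large and set $t=\kappa\sqrt{n\rho}$. Markov's inequality then gives
\begin{align*}
\mathbb{P}\bigl(\|A-\mathbb{E}[A]\|\ge \kappa\sqrt{n\rho}\bigr) \le \frac{\mathbb{E}\,\Tr\bigl((A-\mathbb{E}[A])^{2k}\bigr)}{(\kappa\sqrt{n\rho})^{2k}} \le n\left(\frac{C_0}{\kappa^2}\right)^{k},
\end{align*}
which drops below $n^{-c}$ once $\kappa^2 \ge C_0\, e^{(c+1)/c'}$, yielding a constant $\kappa=\kappa(c,C)$. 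The hypothesis $n\rho\ge C\log n$, with $C$ chosen large enough, guarantees that $k=\lceil c'\log n\rceil$ stays safely within the admissible range $k\lesssim\sqrt{n\rho}$ needed for the trace bound above.

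The principal obstacle is the combinatorial step: one has to canonically encode each contributing closed walk by a planar Dyck-like tree skeleton plus the locations of its ``excess'' edge traversals, since repetitions beyond the tree skeleton cost a combinatorial factor of $k$ rather than the free factor $n\rho$, and the universal constant $C_0$ must be tracked through this encoding. A substantially lighter alternative would be the matrix Bernstein / noncommutative Khintchine route, which delivers $\|A-\mathbb{E}[A]\|=O(\sqrt{n\rho\log n})$ almost painlessly and in fact suffices for every downstream graphon-estimation result in this paper up to a $\sqrt{\log n}$ factor; the trace-moment approach is only needed to absorb that $\sqrt{\log n}$ into a universal constant $\kappa$.
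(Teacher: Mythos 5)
Your trace--moment strategy has a genuine regime mismatch, and it occurs precisely at the point that makes this lemma nontrivial. The F\"uredi--Koml\'os/Vu counting you outline yields $\mathbb{E}\,\Tr\bigl((A-\mathbb{E}[A])^{2k}\bigr)\le n\,(C_0 n\rho)^k$ only for $k\lesssim\sqrt{n\rho}$ (the excess traversals cost a factor of order $k^2$ per repeated step, which must be absorbed by $n\rho$), while Markov's inequality needs $k\asymp\log n$ to push the failure probability below $n^{-c}$ with a \emph{constant} $\kappa$. Under the paper's hypothesis $n\rho\ge C\log n$ one only has $\sqrt{n\rho}\ge\sqrt{C\log n}$, so $k=\lceil c'\log n\rceil$ is far outside the admissible range no matter how large $C$ is; your claim that $C$ large ``guarantees $k$ stays safely within $k\lesssim\sqrt{n\rho}$'' would require $n\rho\gtrsim\log^2 n$. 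If you instead take $k$ as large as the method allows, $k\asymp\sqrt{C\log n}$, the Markov bound $n(C_0/\kappa^2)^k$ forces $\kappa$ to grow like $\exp(c''\sqrt{\log n})$, not stay constant. So the argument as written proves the lemma only in the denser regime $n\rho=\Omega(\log^2 n)$ (or $\mathrm{polylog}$, depending on the encoding), not under \prettyref{eq:sparsity_assum}. This is exactly why the paper does not use the vanilla moment method: it invokes the Feige--Ofek argument (Kahn--Szemer\'edi style $\eps$-net over the sphere, splitting the quadratic form into light pairs controlled by Bernstein plus a union bound, and heavy pairs controlled by whp discrepancy properties of the graph), or equivalently refined moment bounds of Bandeira--van Handel, both of which deliver $\|A-\mathbb{E}[A]\|\le\kappa\sqrt{n\rho}$ down to $n\rho\asymp\log n$. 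To repair your proof you would need to import one of those mechanisms; the Dyck-path encoding alone cannot reach this sparsity level.

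A secondary point: your proposed fallback via matrix Bernstein, giving $\|A-\mathbb{E}[A]\|=O(\sqrt{n\rho\log n})$, is not an innocuous substitute here. Since the threshold $\tau$ and the MSE bound of Theorem 1 scale with $\kappa^2$, the extra $\sqrt{\log n}$ inflates the rates to, e.g., $k\log n/(n\rho)$ for the SBM and degrades the smooth-graphon rates by $\log n$ factors, which defeats the paper's purpose of approaching the minimax rate $\log(n\rho)/(n\rho)$; the constant-$\kappa$ bound at $n\rho=\Omega(\log n)$ is the whole content of this lemma.
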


\prettyref{thm:mse_svt} readily follows by combining the above two lemmas. 

\begin{proof}[ Proof of \prettyref{thm:mse_svt} ]
Let us first condition on $M$. 
For any given $c>0$, by~\prettyref{lmm:concentration_spectral},
there exists a constant $\kappa>0$ such that 
$\prob{\calE} \ge 1-n^{-c}$, where
$$
\calE \triangleq \left\{  \| A -\mathbb{E}[A]   \| \le \kappa \sqrt{n \rho} \right\}.
$$
Since in the theorem assumption $\tau=(1+\delta) \kappa \sqrt{n\rho}$ for a fixed constant $\delta>0$, 
it follows from \prettyref{lmm:svt} that on event $\calE$,
$$
\| \hat{A}  - \expect{A} \|_F ^2 \le 
 16 (1+\delta)^2  \min_{0 \le r \le n} \left( \kappa^2 n \rho r +  \frac{1}{\delta^2}  \sum_{i \ge r+1}^n  \lambda_i^2 \left(\expect{A} \right) \right),
$$
Recall that 
$\hat{A} = \rho \tilde{M}$ and $\expect{A} = \rho M$. Hence, on event $\calE$, 
$$
\frac{1}{n^2} \| \tilde{M}  - M \|_F ^2 
\le  16 (1+\delta)^2  \min_{0 \le r \le n} \left(    \frac{\kappa^2 r}{n\rho} 
+  \frac{1}{n^2 \delta^2 } 
\sum_{i \ge r+1}^n  \lambda_i^2 (M) \right).
$$
By the definition of $\hat{M}$ and the fact that $M_{ii}=0$ and $M_{ij} \in [0,1]$,
it follows that $\| \hat{M}  - M \|_F ^2 \le  \| \tilde{M}  - M \|_F ^2 $ and thus the first conclusion follows.

For the second conclusion on $\MSE(\hat{M})$, note that 
$| \hat{M}_{ij}  - M_{ij}| \in [0,1]$. Hence,
conditioning on $M$, 
\begin{align*}
\frac{1}{n^2} \expect{\| \hat{M}  - M \|_F ^2 }
& = \frac{1}{n^2} \expect{ \| \hat{M}  - M \|_F ^2 \indc{\calE} }
+\frac{1}{n^2} \expect{ \| \hat{M}  - M \|_F ^2 \indc{\calE^c} } \\
& \le 16 (1+\delta)^2  \min_{0 \le r \le n} \left(    \frac{\kappa^2 r}{n\rho} 
+  \frac{1}{n^2 \delta^2 } 
\sum_{i \ge r+1}^n  \lambda_i^2 (M) \right) \times \prob{\calE} 
+ \prob{\calE^c} \\
& \le 16 (1+\delta)^2  \min_{0 \le r \le n} \left(    \frac{\kappa^2 r}{n\rho} 
+  \frac{1}{n^2 \delta^2 } 
\sum_{i \ge r+1}^n  \lambda_i^2 (M) \right) + n^{-c}.
\end{align*}
Finally, taking the expectation of $M$ over both hand sides of the last
displayed equation, we get that 
\begin{align*}
\MSE(\hat{M} ) & \le 16 (1+\delta)^2 \expect{   \min_{0 \le r \le n} \left(    \frac{\kappa^2 r}{n\rho} 
+  \frac{1}{n^2 \delta^2 } 
\sum_{i \ge r+1}^n  \lambda_i^2 (M) \right) } + n^{-c}  \\
& \le 16 (1+\delta)^2   \min_{0 \le r \le n} \left(    \frac{\kappa^2 r}{n\rho} 
+  \frac{1}{n^2 \delta^2 } 
\sum_{i \ge r+1}^n \expect{  \lambda_i^2 (M) }\right) 
  +n^{-c},
\end{align*}
where the last inequality holds by Jensen's inequality 
because $\min_{0 \le r \le n} \left(    \frac{ \kappa^2  r}{n\rho} 
+  \frac{1}{n^2\delta^2} 
\sum_{i \ge r+1}^n  \lambda_i^2 (M)  \right) $ is concave in $\lambda_i^2(M).$ 

\end{proof}

\subsection{Proof of \prettyref{prop:decay_rate}}
In this section, we prove the decay rates of eigenvalues of $M$ when $f$ is a smooth graphon. 
The key idea of our proof is to approximate $f(\cdot, y)$ by 
a piecewise polynomial for every $y.$ We first introduce a
rigorous definition of piecewise polynomials.

\begin{definition}[Piecewise Polynomial]
Let $\calE$ denote a partition of the cube $[0,1)^{d}$ into a finite number (denoted by 
$|\calE|$) of cubes $\Delta$. Let $\ell$ denote a natural number. 
We say $P_{\calE,\ell}: [0,1]^d \to \reals $ is a piecewise polynomial of degree $\ell$ 
if
\begin{align}
P_{\calE,\ell} (\vct{x}) = \sum_{\Delta \in \calE} P_{\Delta,\ell} (\vct{x}) \indc{\vct{x} \in \Delta},  \label{eq:def_piecewise_poly}
\end{align}
where $P_{\Delta,\ell} (\vct{x}): [0,1]^d \to \reals $ denotes a polynomial of degree at most $\ell$. 
\end{definition}

For our proof, it suffices to consider an equal-partition of $[0,1)^d$. More precisely,
for every naturual $k$, $[0,1)$ is partitioned into $k$ half-open intervals of lengths $1/k$,
\ie, 
$
[0, 1) = \cup_{i=1}^k \left[ (i-1)/k, i/k \right).
$
It follows that $[0,1)^{d}$ can be partitioned into $k^{d}$ cubes of forms
$\otimes_{j=1}^{d}  \left[ (i_j-1)/k,   i_j/k \right)$ with $i_j \in [k]$.
Let $\calE_k$ be such a partition with $I_1, I_2, \ldots, I_{k^{d}} $ denoting all such cubes
and  $\vct{z}_1, \vct{z}_2, \ldots, \vct{z}_{k^{d}} \in \reals^{d} $ denoting the centers
of those cubes.

The following lemma shows that any \Hold function $g \in \calH(\alpha, L)$
can be approximated by a piecewise polynomial $P_{\calE_k,\lfloor \alpha \rfloor}$ 
of degree $\lfloor \alpha \rfloor.$
The construction of $P_{\calE_k,\lfloor \alpha \rfloor}$ is
based on Taylor expansions at  points
$\vct{z}_1, \ldots, \vct{z}_{k^{d}}.$

\begin{lemma}\label{lmm:holder_polynomial}
Suppose $g \in \calH(\alpha,L)$ and  
let $\ell=\lfloor \alpha \rfloor$.
For every natural $k$, there is a piecewise polynomial 
$P_{\calE_k, \ell} (\vct{x})$
satisfying 
$$
\sup_{ \vct{x} \in \calX } \left| g(\vct{x}) - P_{\calE_k, \ell} (\vct{x} ) \right|
\le L k^{ - \alpha}.
$$ 
\end{lemma}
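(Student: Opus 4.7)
The plan is to construct $P_{\calE_k,\ell}$ piece by piece: on each cube $I_m$ of the partition $\calE_k$, take $P_{I_m,\ell}(\vct{x})$ to be the degree-$\ell$ Taylor polynomial of $g$ centered at $\vct{z}_m$, \ie,
\[
P_{I_m,\ell}(\vct{x}) \;=\; \sum_{|\kappa| \le \ell} \frac{\nabla_{\kappa} g(\vct{z}_m)}{\kappa!}\,(\vct{x}-\vct{z}_m)^{\kappa},
\]
and then define $P_{\calE_k,\ell}$ by \prettyref{eq:def_piecewise_poly}. This is clearly a piecewise polynomial of degree at most $\ell=\lfloor\alpha\rfloor$.

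The key step is to control the pointwise error on each cube. For $\vct{x}\in I_m$, I would invoke the multivariate Taylor theorem with Lagrange-type remainder: there exists $\xi$ on the segment joining $\vct{z}_m$ and $\vct{x}$ with
\[
g(\vct{x}) \;=\; \sum_{|\kappa|\le \ell-1} \frac{\nabla_{\kappa} g(\vct{z}_m)}{\kappa!}\,(\vct{x}-\vct{z}_m)^{\kappa} \;+\; \sum_{|\kappa|=\ell} \frac{\nabla_{\kappa} g(\xi)}{\kappa!}\,(\vct{x}-\vct{z}_m)^{\kappa}.
\]
Subtracting $P_{I_m,\ell}(\vct{x})$, the order-$\le \ell-1$ terms cancel and the difference becomes
\[
g(\vct{x}) - P_{I_m,\ell}(\vct{x}) \;=\; \sum_{|\kappa|=\ell} \frac{\nabla_{\kappa} g(\xi)-\nabla_{\kappa} g(\vct{z}_m)}{\kappa!}\,(\vct{x}-\vct{z}_m)^{\kappa}.
\]
Using the elementary bound $|(\vct{x}-\vct{z}_m)^{\kappa}|\le \|\vct{x}-\vct{z}_m\|_\infty^{|\kappa|}$ and the \Hold condition \prettyref{eq:Holdercondition} applied to $\xi$ and $\vct{z}_m$, I obtain
\[
|g(\vct{x}) - P_{I_m,\ell}(\vct{x})| \;\le\; \|\vct{x}-\vct{z}_m\|_\infty^{\ell}\cdot L\,\|\xi-\vct{z}_m\|_\infty^{\alpha-\ell} \;\le\; L\,\|\vct{x}-\vct{z}_m\|_\infty^{\alpha},
\]
where the last inequality uses $\|\xi-\vct{z}_m\|_\infty\le\|\vct{x}-\vct{z}_m\|_\infty$ since $\xi$ lies on the segment.

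To finish, since $I_m$ is a cube of side length $1/k$ and $\vct{z}_m$ is its center, $\|\vct{x}-\vct{z}_m\|_\infty \le 1/(2k) \le k^{-1}$ for every $\vct{x}\in I_m$, which gives the claimed bound uniformly in $m$. Taking the supremum over all cubes yields $\sup_{\vct{x}\in\calX}|g(\vct{x})-P_{\calE_k,\ell}(\vct{x})|\le L\, k^{-\alpha}$. The main subtlety is simply invoking the correct form of the multivariate Taylor remainder so that the difference is expressed purely in terms of differences of the top-order partial derivatives, which is precisely the quantity controlled by \prettyref{def:holder}; once this is done the rest is straightforward majorization.
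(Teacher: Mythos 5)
Your proposal is correct and follows essentially the same route as the paper: on each cube you take the degree-$\lfloor\alpha\rfloor$ Taylor polynomial at the cube center and control the error via the \Hold condition on the top-order partial derivatives, exactly as in the paper's proof. The only cosmetic difference is that you phrase the remainder in Lagrange form with an intermediate point $\xi$, while the paper bounds it by a supremum over the cube; both yield the same estimate $L\,\|\vct{x}-\vct{z}_m\|_\infty^{\alpha}\le L k^{-\alpha}$.
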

\begin{proof}

For every $I_i$ with $1 \le i \le k^d$, 
define $P_{I_i, \ell} (\vct{x} ) $ as the 
degree-$\ell$ 
Taylor's series expansion of $g(\vct{x} )$ at point $\vct{z}_i$:
\begin{align}
P_{I_i, \ell} (\vct{x} ) 
=\sum_{\kappa: |\kappa| \le \ell} \frac{1}{\kappa ! } \left(\vct{x}-\vct{z_i} \right)^\kappa \nabla_\kappa g( \vct{z}_i ), \label{eq:Taylor_series}
\end{align}
where $\kappa=(\kappa_1,\ldots, \kappa_d)$ is a multi-index with $\kappa!=\prod_{i=1}^d \kappa_i!$, and
$\nabla_k g(z_i)$ is the partial derivative defined in~\prettyref{eq:def_partial_derivative}.
Define a degree-$\ell$ piecewise polynomial as in \prettyref{eq:def_piecewise_poly}, \ie, 
\begin{align}
P_{\calE_k, \ell} (\vct{x}) = 
\sum_{i=1}^{k^{d}}  P_{I_i, \ell} (\vct{x} ) 
\indc{\vct{x} \in I_i}. \label{eq:piece_wise_polynomial}
\end{align}
Since $f \in \calH(\alpha,L)$, it follows from Taylor's theorem that 
\begin{align*}
\sup_{ \vct{x} \in \calX } 
\left| g(\vct{x}) - P_{\calE_k, \ell} (\vct{x} ) \right|
&= \sup_{1 \le i \le k^{d}} \sup_{\vct{x} \in I_i}  
\left| g(\vct{x} ) - P_{I_i, \ell} (\vct{x} ) \right| \\
& \le  \sup_{1 \le i \le k^{d}}  \sup_{\vct{x} \in I_i} 
\|\vct{x} - \vct{z}_{i} \|_\infty^\ell
\sup_{\vct{x} \in I_i} 
\sum_{\kappa: |\kappa|=\ell} 
\frac{1}{\kappa!} \left| \nabla_\kappa g ( \vct{x} ) -\nabla_{\kappa} g(\vct{z}_{i} ) \right|   
 \\
& \le  L  \sup_{1 \le i \le k^{d}}  
 \sup_{\vct{x} \in I_i} \|\vct{x} - \vct{z}_{i} \|_\infty^{\alpha} =  L  k^{-\alpha}.
\end{align*}
\end{proof}

Next we proceed to the case where $g$ belongs to Sobolev space $\calS(\alpha,L).$
Let $\Delta$ be a cube in $\reals^d$. 
We define a polynomial $p$ of degree $\ell$
satisfying the conditions: for all multi-index $\kappa$ such that
$|\kappa| \le \ell$, 
$$
\int_{\Delta} x^{\kappa} p(x) \diff x = \int_{\Delta} x^\kappa g(x ) \diff x.
$$
It is clear that $p$ is uniquely defined. We let $
\left( \calP_{\Delta,\ell} \right) g  \triangleq p$
and hence $\calP_{\Delta,\ell}$ is a linear projection operator mapping the space 
$\calS(\alpha, L)$ onto the finite-dimensional space of polynomials of degree
$\ell$. 
We define
$$
\left( \calP_{\calE_k,\ell} \right) g (x) =
\sum_{i=1}^{k^d} \left(\calP_{I_i, \ell}  \right) g (x)  \indc{x \in I_i}. 
$$
In other words, $\left(\calP_{\calE_k,\ell}\right) g$ is the piecewise polynomial coinciding with
$\left(\calP_{I_i, \ell} \right) g$ on each cube $I_i$ for $1 \le i \le k^d.$
The following lemma proved in~\cite[Theorem 3.3, 3.4]{BirmanSolomjak67} upper bounds the approximation error of $g$ by 
$\left(\calP_{\calE_k,\ell}\right) g$ in $L^2(\calX, \mu)$ norm.
\begin{lemma}\label{lmm:sobolev_polynomial}
There exists a constant $C(\alpha, d)$ only depending on
$\alpha$ and $d$ such that 
for every $g \in S(\alpha,L)$ and every natural $k$, 
$$
\int_{\calX} \left| g(\vct{x} ) - \left(\calP_{\calE_k, \lfloor \alpha \rfloor } \right) g (\vct{x} ) \right|^2 \mu(\diff x)  
\le C(\alpha,  d) L^2  k^{ -  2\alpha} .
$$ 
\end{lemma}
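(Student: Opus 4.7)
The plan is to reduce the global $L^2$ approximation error to a sum of local errors on each sub-cube $I_i$, scale each local problem to the unit cube, and invoke a Bramble--Hilbert type estimate on that fixed domain. Since the pieces of $\calP_{\calE_k,\ell} g$ are disjointly supported on $I_1,\ldots,I_{k^d}$, we have
$$
\int_\calX \left| g(\vct{x}) - \left( \calP_{\calE_k,\ell} \right) g(\vct{x}) \right|^2 \diff x = \sum_{i=1}^{k^d} \int_{I_i} \left| g(\vct{x}) - \left( \calP_{I_i,\ell} \right) g(\vct{x}) \right|^2 \diff x,
$$
with $\ell = \lfloor \alpha \rfloor$, so it suffices to control each summand uniformly.

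On a single cube $I_i$ the plan is to apply the affine change of variables $\Phi_i: Q \to I_i$ from the unit cube $Q = [-1/2, 1/2)^d$ and set $\tilde g = g \circ \Phi_i$. Because $\calP_{I,\ell}$ is defined by moment conditions of degree at most $\ell$ against Lebesgue measure, it commutes with affine rescaling, so $(\calP_{I_i,\ell} g) \circ \Phi_i = \calP_{Q,\ell} \tilde g$. The change of variables yields
$$
\int_{I_i} \left| g - \calP_{I_i,\ell} g \right|^2 \diff x = k^{-d} \int_Q \left| \tilde g - \calP_{Q,\ell} \tilde g \right|^2 \diff y,
$$
and since $\nabla_\kappa \tilde g = k^{-|\kappa|}(\nabla_\kappa g) \circ \Phi_i$, the Sobolev seminorm scales as
$$
\sum_{|\kappa|=\alpha} \int_Q |\nabla_\kappa \tilde g|^2 \diff y = k^{-2\alpha+d} \sum_{|\kappa|=\alpha} \int_{I_i} |\nabla_\kappa g|^2 \diff x
$$
for integer $\alpha$; an analogous factor $k^{-2\alpha+d}$ appears for the Slobodeckij double integral in \prettyref{def:sobolev} for non-integer $\alpha$.

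The key step is a Bramble--Hilbert estimate on the fixed cube $Q$: there exists $C(\alpha,d)$ such that
$$
\int_Q \left| h - \calP_{Q,\ell} h \right|^2 \diff y \le C(\alpha,d) \, |h|_{\calS(\alpha), Q}^2,
$$
where $|h|_{\calS(\alpha), Q}^2$ denotes the seminorm on the right-hand side of \prettyref{def:sobolev}. For integer $\alpha$ this follows from a Deny--Lions argument: $\calP_{Q,\ell}$ annihilates polynomials of degree $\le \ell$, and a compactness argument via Rellich--Kondrachov shows the quotient norm is equivalent to the top-order seminorm. For non-integer $\alpha$ one invokes the analogous fractional Poincaré inequality on Lipschitz domains. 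Combining with the two scaling identities gives
$$
\int_{I_i} \left| g - \calP_{I_i,\ell} g \right|^2 \diff x \le C(\alpha,d) \, k^{-2\alpha} \, |g|_{\calS(\alpha), I_i}^2.
$$

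Summing over $i$ and using additivity of the Sobolev seminorm across the partition (for integer $\alpha$ this is an equality; for non-integer $\alpha$ one bounds $\sum_i |g|_{\calS(\alpha), I_i}^2 \le |g|_{\calS(\alpha), \calX}^2$ since each double integral over $I_i \times I_i$ sits inside the global double integral over $\calX \times \calX$), together with the Sobolev hypothesis $|g|_{\calS(\alpha), \calX}^2 \le L^2$, yields the claimed bound. The main obstacle is the Bramble--Hilbert inequality on $Q$ for non-integer $\alpha$, which demands a fractional Poincaré inequality and careful treatment of the Slobodeckij seminorm; this is the technical heart of the Birman--Solomjak argument and the reason the result is quoted rather than reproved here.
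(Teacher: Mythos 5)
Your proposal is correct and follows essentially the same route as the paper, which does not reprove this lemma but quotes it from Birman--Solomjak (Theorems 3.3--3.4): localize over the cubes $I_i$, use that the moment-matching projection commutes with the affine map to the reference cube, apply a Deny--Lions/Bramble--Hilbert bound there, and rescale (the factor $k^{-2\alpha+d}$ for the seminorm and $k^{-d}$ from the change of variables combine to $k^{-2\alpha}$), finishing with superadditivity of the Slobodeckij seminorm over the disjoint sets $I_i\times I_i$. The only piece you invoke rather than prove is the fractional-order Poincar\'e/compactness inequality on the fixed cube, which is standard and is exactly the technical content of the cited reference, so this is an acceptable level of detail.
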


With \prettyref{lmm:holder_polynomial}
and \prettyref{lmm:sobolev_polynomial},
we are ready to prove \prettyref{prop:decay_rate}, which provides upper bounds to
the  decay rates of eigenvalues of $\mtx{M}.$

\begin{proof}[Proof of \prettyref{prop:decay_rate}]
Let $C_0(\alpha, d) \triangleq \sum_{i=0}^{\lfloor \alpha \rfloor} \binom{i+d-1}{d-1}$.
Fix any natural $ 0 \le r \le n-1$. If $r \le 2^d C_0$, then by choosing $C(\alpha, L, d) \ge (2^d C_0)^{2\alpha/d}$, we have that 
$$
\frac{1}{n^2} \sum_{i \ge r+1} \lambda_i^2 (M) \le  \frac{1}{n^2} \|M\|_F^2 \le 1 
\le C(\alpha, L, d) r^{-2\alpha /d}, \quad \forall 0 \le r \le 2^d C_0. 
$$
Thus, it suffices to prove the conclusion for $r \ge 2^d C_0$. In this case,  
there exists a $k \ge 2$ such that $ k^d C_0 \le r \le (k+1)^d C_0$.

We first focus on the case where $f(\cdot,y) \in \calH(\alpha,L)$ for every $y \in \calX.$
In view of \prettyref{lmm:holder_polynomial},  
for every $y \in \calX$, there is a piecewise polynomial  
$P_{\calE_k,\lfloor \alpha \rfloor} (\vct{x}; \vct{y} )$
of degree $\lfloor \alpha \rfloor$ satisfying 
$$
\sup_{ \vct{x} \in \calX } \left| f(\vct{x}, \vct{y} ) - P_{\calE_k,\lfloor \alpha \rfloor} (\vct{x}; \vct{y} ) \right|
\le L k^{ - \alpha}.
$$ 
Define an $n\times n$ matrix $\mtx{N}$ such that 
$$
N_{ij} = P_{\calE_k, \lfloor \alpha \rfloor} (\vct{x}_i; \vct{x}_j).
$$
It follows that for all $1 \le i \neq j \le n$, 
$$
| M_{ij} - N_{ij} | = 
\left| f( \vct{x}_i, \vct{x}_j) -  P_{\calE_k, \lfloor \alpha \rfloor} (\vct{x}_i; \vct{x}_j) \right|
\le L k^{-\alpha}.
 $$
 Moreover, for all $ 1 \le i \le n$, since $M_{ii}=0$ by definition,
 we get that 
 $$
 | M_{ii} - N_{ii} |
 =| N_{ii}| =\left| P_{\calE_k, \lfloor \alpha \rfloor} (\vct{x}_i; \vct{x}_i) 
 \right| \le \left| f(x_i,x_i) \right| + L k^{-\alpha}
 \le 1+ L k^{-\alpha}.
 $$
By construction, $P_{\calE_k,\lfloor \alpha \rfloor} (\vct{x}; \vct{y} )$ is a
piecewise polynomial of degree $\lfloor \alpha \rfloor$ 
and hence it admits the decomposition: 
$$
P_{\calE_k,\lfloor \alpha \rfloor} (\vct{x}; \vct{y} )
=\sum_{\Delta \in \calE_k}  \iprod{\Phi(\vct{x}) }{ \beta_{\Delta, \vct{y} }}
\indc{\vct{x} \in \Delta},
$$
where 
$$
\Phi(x) = \left(1, x_1, \ldots, x_d, \ldots, x_1^{\lfloor \alpha \rfloor},
\ldots, x_d^{\lfloor \alpha \rfloor} \right)^\top
$$
denotes the vector consisting of all monomials $x^\kappa$ of degree $|\kappa| \le \lfloor \alpha \rfloor$;
and $\beta_{\Delta, \vct{y} }$ denotes the corresponding  coefficient vector. Therefore,
$$
N_{ij} = \sum_{\Delta \in \calE_k}  \iprod{\Phi(\vct{x_i}) }{ \beta_{\Delta, \vct{x_j} }}
\indc{\vct{x_i} \in \Delta},
$$
and thus
$$
\mtx{N} = \sum_{\Delta \in \calE} \begin{bmatrix}
\Phi^\top(\vct{x_1})  \indc{\vct{x_1} \in \Delta} \\
\vdots \\
\Phi^\top(\vct{x_n})  \indc{\vct{x_n} \in \Delta}
\end{bmatrix}
\begin{bmatrix}
\beta_{\Delta, \vct{x_1}} & \cdots & \beta_{\Delta, \vct{x_n}}
\end{bmatrix}.
$$

Since there are $C_0(\alpha,d)$
monomials of degree at most $\lfloor \alpha \rfloor$, it follows that 
$\Phi(\vct{x_i})$ and $\beta_{\Delta, \vct{x_j}} $ are of dimension
at most $C_0$.
Therefore, the rank of $\mtx{N}$ is at most $k^d C_0$. As a consequence, 
 \begin{align}
 \frac{1}{n^2} \sum_{i=r+1}^n \lambda_i^2 ( \mtx{M} ) &  \overset{(a)}{\le}  
  \frac{1}{n^2} \sum_{i=k^d C_0+1}^n \lambda_i^2 ( \mtx{M} ) \nonumber \\
  &   \overset{(b)}{\le}  
 \frac{1}{n^2} \|\mtx{M}- \mtx{N} \|_F^2 \le  \frac{2}{n}+
2L^2 k^{-2\alpha} \nonumber  \\
& \overset{(c)}{\le} 
\frac{2}{n}+ 2L^2  \left( \left(r/C_0\right)^{1/d} -1 \right)^{-2\alpha}  \nonumber \\
& \le    \frac{2}{n} + 2^{2\alpha+1}  L^2  C_0^{2\alpha/d} r^{-2 \alpha/d}, \label{eq:holder_bound_end}
\end{align}
where $(a)$ holds because $r \ge k^d C_0$; $(b)$ holds due to the rank of $N$ is at most $k^d C_0$;
$(c)$ holds because $r \le (k+1)^d C_0$; and the last inequality holds because $r \ge 2^d C_0$.

Next we move to the case where $f(\cdot, y) \in \calS\left(\alpha, L(y)\right)$ 
for every $y \in \calX$ and 
$\int_{\calX} L^2(y) \mu(\diff y) \le L^2$. 
For every $y \in \calX$, let 
$\left(\calP_{\calE_k,\lfloor \alpha \rfloor} \right)
f(\cdot, \vct{y} )$ denote the piecewise
polynomial approximation of $f(\cdot, \vct{y})
$ as given in \prettyref{lmm:sobolev_polynomial}.
Then it follows that for every $y \in \calX$, 
$$
\int_{X} \left| f(\vct{x}, \vct{y} ) - 
\left( \calP_{\calE_k, \lfloor \alpha \rfloor }\right) f(\vct{x} , \vct{y} ) \right|^2 \mu(\diff x)  \le C(\alpha, d) L^2(y) k^{-  2\alpha} .
$$
Define an $n\times n$ matrix $\mtx{N}$ such that 
$
N_{ij} = \left(\calP_{\calE_k, \lfloor \alpha \rfloor} \right) f (\vct{x}_i, \vct{x}_j).
$
It follows that for all $1 \le i \neq j \le n$, 
$$
\expect{| M_{ij} - N_{ij} |^2} = 
\expect{
\left| f( \vct{x}_i, \vct{x}_j) -  
\left(\calP_{\calE_k, \lfloor \alpha \rfloor} \right) f(\vct{x}_i, \vct{x}_j) \right|^2}
\le  C(\alpha,d) \expect{L^2 (x_j)} k^{-2\alpha} \le  C(\alpha, d) L^2 k^{-2\alpha},
 $$
 where we used the fact that $x_i$ and $x_j$ are independent. 
Moreover, for $ 1 \le i \le n,$ since $M_{ii}=0$ by definition
and $x_i$'s are identically distributed, 
we get that
\begin{align}
\expect{| M_{ii} - N_{ii} |^2 }
=\expect{\left| 
\left(\calP_{\calE_k, \lfloor \alpha \rfloor} \right) f(x, x) \right|^2}
=\sum_{j=1}^{k^d}  \expect{ 
\left| \left(\calP_{I_j, \lfloor \alpha \rfloor}  \right) f (x,x) \right|^2 \indc{x \in I_j} }. 
\label{eq:M_diagonal_bound}
\end{align}
where the last equality holds because
$
\left( \calP_{\calE_k, \lfloor \alpha \rfloor} \right) f (x,x) =
\sum_{j=1}^{k^d} \left(\calP_{I_j, \lfloor \alpha \rfloor}  \right) f (x,x)  \indc{x \in I_j}. 
$
Fix any $1 \le j \le k^d$, we next 
upper bound $
\left| \left( \calP_{I_j, \ell} \right) f(x, x)  \right|^2$ for $x \in I_j.$
Let $\Psi(x)=(\Psi_1(x), \ldots, \Psi_{C_0}(x) )$
denote the orthonormal basis of 
the subspace of $\calL^2(I_j)$ 
consisting of all monomials $x^\kappa$ of degree $|\kappa| \le \lfloor \alpha
\rfloor.$ It follows from the definition of  $\calP_{I_j, \ell}$ that 
$$
\left( \calP_{I_j, \ell} \right) f(x, y) =\langle \Psi(x), \beta(y)\rangle, 
$$
where $\beta(y)=(\beta_1(y), \ldots, \beta_{C_0}(y)$ is given by
$$
\beta_m(y) = 
\int_{I_j} \left( \calP_{I_j, \ell} \right) f(x, y) \Psi_m(x) \diff x
=\int_{I_j}  f(x, y) \Psi_m(x) \diff x, \quad \forall 1 \le m \le C_0,
$$
where the last equality follows from the definition of $\left( \calP_{I_j, \ell} \right) f(\cdot,y)$. 
Therefore,  by Cauchy-Schwartz inequality,
$$
\beta_m^2(y) \le \int_{I_j} f^2 (x,y) \diff x \int_{I_j} \Psi_m^2(x)\diff x
\le \int_{I_j} \diff x \int_{I_j} \Psi_m^2(x)\diff x
\le
k^{-d}, \quad \forall 1 \le m \le C_0, 
$$
where we used the fact that $f(x,y) \in [0,1]$
and that $\int_{I_j} \Psi_m^2(y)\diff y =1$.
Hence,
$$
\left| \left( \calP_{I_j, \ell} \right) f(x, x)
\right|^2
=  \langle \Psi(x), \beta(x)\rangle^2
\le \sum_{m=1}^{C_0} \Psi_m^2(x) \sum_{m=1}^{C_0} \beta_m^2(x)
\le C_0 k^{-d} \sum_{m=1}^{C_0} \Psi_m^2(x)
$$
and thus
$$
\expect{ 
\left| \left(\calP_{I_j, \lfloor \alpha \rfloor}  \right) f (x,x) \right|^2 \indc{x \in I_j}}
\le C_0 k^{-d} \sum_{m=1}^{C_0} \int_{I_j }  \Psi_m^2(x)  \diff x  =C_0^2 k^{-d}.
$$
In view of \prettyref{eq:M_diagonal_bound}, we get that for all $1 \le i \le n,$
$$
\expect{|M_{ii}-N_{ii}|^2} \le C_0^2.
$$
Since the rank of $\mtx{N}$ is at most $k^d C_0(\alpha, d)$, by the same argument
as for \prettyref{eq:holder_bound_end}, we have that 
 \begin{align*}
 \frac{1}{n^2} \sum_{i=r+1}^n \lambda_i^2 ( \mtx{M} ) 
  \le
 \frac{1}{n^2} \|\mtx{M}- \mtx{N} \|_F^2 \le  
 \frac{2C_0^2}{n} + 2 C(\alpha,d)  L^2 k^{-2\alpha}  
 \le \frac{2C_0^2}{n} + 2^{2\alpha+1}  C(\alpha,d) C_0^{2\alpha/d} L^2 r^{-2 \alpha/d},
\end{align*}
 which completes the proof.
\end{proof}

\subsection{Proof of \prettyref{thm:analytic_graphon}}


Fix two integers $k \ge 1$ and $\ell \ge 1$ to be specified later. 
Recall the degree-$\ell$ Taylor series expansion of $f(\cdot,y)$ defined in \prettyref{eq:Taylor_series}
and the piecewise polynomial of degree $\ell$ defined in \prettyref{eq:piece_wise_polynomial}.
Since $f(\cdot,y)$ is infinitely many times differentiable and the partial derivatives satisfy \prettyref{eq:analytic_cond}, 
it follows from Taylor's theorem that
$$
\sup_{x, y\in \calX} \left| f(x,y) - P_{\calE_k, \ell-1} (x;y) \right|
\le  k^{-\ell} L_\ell,
$$
where 
$$
L_\ell =  \sum_{\kappa: |\kappa| = \ell} \frac{1}{\kappa!} \sup_{x, y} \left| \frac{ \partial^{|\kappa|} f(x,y)}{(\partial x)^\kappa} \right| \le  \sum_{\kappa: |\kappa| = \ell} b a^{\ell}
=b a^{\ell} \binom{\ell+d-1}{d-1} 
$$
Define an $n\times n$ matrix $\mtx{N}$ such that 
$
N_{ij} = P_{\calE_k, \ell} (\vct{x}_i; \vct{x}_j).
$
Then for all $1 \le i \neq j \le n$, 
$$
| M_{ij} - N_{ij} | = 
| f( \vct{x}_i, \vct{x}_j) -  P_{\calE_k, \ell-1} 
(\vct{x}_i; \vct{x}_j) |
\le  b a^{\ell}  \binom{\ell+d-1}{d-1}  k^{-\ell}.
 $$
 Moreover, for $ 1 \le i \le n,$ since $M_{ii}=0$, we get that 
$$
| M_{ii} - N_{ii} | =  |N_{ii}| = \left| P_{\calE_k, \ell-1} 
(\vct{x}_i; \vct{x}_i) \right|
\le |f(x_i, x_i) | + b a^{\ell} (\ell+d)^{d}  k^{-\ell}
\le 1 + b a^{\ell} \binom{\ell+d-1}{d-1} 
 k^{-\ell}.
$$
In the proof of \prettyref{prop:decay_rate}, we have already shown that 
the rank of $\mtx{N}$ is at most $k^d C_0(\ell,d)$ where
$C_0(\ell,d)=\sum_{i=0}^{\ell-1} \binom{i+d-1}{d-1}.$

We set $k=\lceil ea \rceil$, \ie, the smallest integer strictly larger than
$ea$. Define
$$
 r_0= \min \left\{ r \ge \lceil ea \rceil^d :  r^{1/d} \ge 2  \lceil ea \rceil  \log r \right\}.
 $$ 
 For any natural $r$,  if $ r \le r_0$, then by choosing $c_1 \ge \exp( r_0^{1/d} )$, we
 have that 
 $$
 \frac{1}{n^2} \sum_{i \ge r+1} \lambda_i^2(M) \le \frac{1}{n^2} \|M\|_F^2 \le 1 \le 
 c_1 \exp \left( -r^{1/d}  \right).
 $$ 
Next, we focus on the case of $r \ge r_0$. Then there exists an integer $\ell \ge 1$ such that
$k^d C_0(\ell, d) \le r \le k^d C_0(\ell+1, d)$. 
Note that 
\begin{align}
\binom{\ell+d-1}{d-1}   =  \frac{\ell+d-1}{\ell}  \binom{\ell+d-2}{d-1}\le d \binom{\ell+d-2}{d-1}
\le d C_0(\ell,d). \label{eq:binomial_bound}
\end{align}
It follows that 
\begin{align*}
 \frac{1}{n^2} \sum_{i\ge r+1}^n \lambda_i ( \mtx{M} )^2 & \le  
  \frac{1}{n^2} \sum_{i \ge k^d C_0(\ell,d)+1}^n \lambda_i ( \mtx{M} )^2  \le \frac{1}{n^2} \|\mtx{M}- \mtx{N} \|_F^2  \\
& \overset{(a)}{\le}   \frac{2}{n} + 2 b^2  a^{2\ell} d^2 C^2_0(\ell, d) k^{-2\ell}  \\
& \overset{(b)}{\le} \frac{2}{n} + 2 b^2  d^2   a^{-2d} r^2 e^{-2 (\ell +d) } \\  
& \overset{(c)}{\le} \frac{2}{n} + 2 b^2  d^2  a^{-2d} r^2  \exp \left(- \frac{2}{\lceil ea \rceil} r^{1/d} \right) \\
& \le \frac{2}{n} + 2 b^2 d^2 a^{-2d} \exp \left(- \frac{1}{\lceil ea \rceil} r^{1/d} \right).
\end{align*}
where in $(a)$ we used \prettyref{eq:binomial_bound};
$(b)$ follows due to $ r \ge k^d C_0 (\ell, d) $ and $ k \ge ea$;
$(c)$ holds because $r \le k^d C_0(\ell+1, d)$ and 
$$
C_0(\ell+1, d) =\sum_{i=0}^{\ell} \binom{i+d-1}{d-1}
\le (\ell+1) (\ell+d-1)^{d-1} \le (\ell+d)^d;
$$
the last inequality holds because $ r \ge r_0$. 
%
%
 Hence, the eigenvalues of $\mtx{M}$ has a super-polynomial decay with
 rate $\alpha=1/d$.  The theorem then follows by applying \prettyref{cor:main}.

\subsection{Proof of \prettyref{thm:operator}}
For a given integer $r \ge 0$, define a matrix $\mtx{N} \in \reals^{n\times n}$ with
$
N_{ij} = \sum_{k=1}^r \lambda_k (\calT)  \phi_k(x_i) \phi_k(x_j).
$
Note that when $r=0$, we set $N$ to be zero matrix. 
Then $\mtx{N}$ is of rank at most $r$. 
Therefore, $\sum_{k \ge r+1}  \lambda_k^2 (\mtx{M}) 
\le  \| \mtx{M} - \mtx{N} \|_F^2$ 
and thus to prove the theorem, 
it suffices to upper bound 
$\expect{\| \mtx{M} - \mtx{N} \|_F^2}$.

Indeed, because $M_{ii}=0$ and 
$M_{i,j}$ are identically distributed for 
$i \neq j$, we have that
$$
\expect{\| \mtx{M} - \mtx{N} \|_F^2} 
=n
\expect{ \left( \sum_{k=1}^r \lambda_k (\calT) \phi_k^2 (x_1)  \right)^2}
+ n(n-1)
\expect{ \left( M_{12} -  \sum_{k=1}^r \lambda_k (\calT)  \phi_k(x_1) \phi_k(x_2) \right)^2 }.
$$
For the first term in the last displayed equation, note that 
$$
\expect{ \left( \sum_{k=1}^r \lambda_k (\calT) \phi_k^2 (x_1)  \right)^2}
=  \sum_{k=1}^r \sum_{\ell=1}^r \lambda_k(\calT) \lambda_\ell(\calT) 
\expect{\phi_k^2 (x_1) \phi_\ell^2(x_1) }.
$$
For the second term, note that
$$
\expect{ \left( M_{12} -  \sum_{k=1}^r \lambda_k (\calT)  \phi_k(x_1) \phi_k(x_2) \right)^2 } =\left\|  f(x_1, x_2) -   \sum_{k=1}^r \lambda_k (\calT)  \phi_k(x_1) \phi_k(x_2)  \right\|^2_2,
$$
where the $2$-norm denotes the $L^2(\calX \times \calX, \mu \otimes \mu)$ norm. 
For any  integer $m \ge r$, by Minkowski's inequality,
\begin{align*}
& \left\|  f(x_1, x_2) -   \sum_{k=1}^r \lambda_k (\calT)  \phi_k(x_1) \phi_k(x_2)  \right\|_2 \\
& \le  \left\|  f(x_1, x_2) -   \sum_{k=1}^m \lambda_k (\calT)  \phi_k(x_1) \phi_k(x_2)  \right\|_2
+  \left\| \sum_{k=r+1}^{m} \lambda_k (\calT)  \phi_k(x_1) \phi_k(x_2) \right\|_2 \\
& =  \left\|  f(x_1, x_2) -   \sum_{k=1}^m \lambda_k (\calT)  \phi_k(x_1) \phi_k(x_2)  \right\|_2 + \sqrt{\sum_{k=r+1}^m \lambda_k^2(\calT)} ,
\end{align*}
where the last inequality follows because $\expect{\phi_k(x_i) \phi_\ell (x_i) }=\delta_{k\ell}$ and $x_i$'s are independent. 
In view of \prettyref{eq:kernalapprox} and the fact that $\|f(x_1,x_2) \|_2$ is bounded, we get that $\sum_{k=r+1}^\infty \lambda_k^2(\calT)$ exists and is bounded.
By taking the square and then letting $m \to \infty$ in both hand sides of the last displayed equation,
we get  that
$$
\left\|  f(x_1, x_2) -   \sum_{k=1}^r \lambda_k (\calT)  \phi_k(x_1) \phi_k(x_2)  \right \|_2^2 \le  \sum_{k=r+1}^\infty \lambda_k^2(\calT),
$$
Therefore, 
$$
 \expect{ 
\left( M_{12} -  \sum_{k=1}^r \lambda_k (\calT)  \phi_k(x_1) \phi_k(x_2) \right)^2 }
\le  \sum_{k=r+1}^\infty \lambda_k^2(\calT),
$$
which completes the proof.

%
%

	\section{Numerical examples}\label{sec:numerical}
In this section, we provide numerical results on
synthetic datasets, which corroborate 
our theoretical results. We assume the sparsity level $\rho$ is known and 
set the threshold $\tau=2.01\sqrt{n \rho}$ throughout
the experiments. In the case where $\rho$ is unknown,
one can apply cross-validation procedure to adaptively
choose the sparsity level $\rho$ as shown in~\cite{gao2016optimal}.
We first apply USVT with input $(A, \tau, \rho)$,
and then 
output the estimator $\hat{M}$, and finally 
calculate the MSE error $\MSE(\hat{M})$.

\subsection{Stochastic block model}
For a fixed  number of blocks $k$, we randomly generate a $k\times k $
symmetric matrix $B$ such that for $i \le j$, 
$B_{ij}=B_{ji}$ are independently and uniformly generated from $[0,1]$. 
For a fixed integer $n$ which divides $k$, we partition the vertex set $[n]$ into 
$k$ communities of equal sizes uniformly at random. 
Given $B$, a community partition $\{S_\ell\}_{\ell=1}^k$,
and observation probability $\rho$, 
an adjacency matrix $A$ is generated with the edge probability between
node $i \in S_\ell$ and node $ j \in S_{\ell'}$ being
$\rho \times M_{ij}$, where $M_{ij}=B_{\ell\ell'}$.

We first simulate SBM with a fixed sparsity level $\rho=0.1$ 
and a varying number of  blocks $k \in \{2, 4, 8, 16\}$. The simulation
results are depicted in Fig.~\ref{fig:usvt_sbm}. 
Panel (a) shows the MSE of the USVT decreases 
as the number of vertices $n$ increases.  
Our theoretical result suggests that the rate of convergence
of MSE is $\frac{k}{n\rho} \wedge 1$. 
In Panel (b),
we rescale the $x$-axis to $\log (n \rho/k)$, 
and the $y$-axis to the log of MSE. 
The curves for different $k$ align well with
each other and decreases linearly with a slope of
approximately $1$, as predicted by our theory. 
We next simulate SBM with a fixed number of blocks $k=4$ 
and a varying sparsity level $\rho \in \{0.4, 0.2, 0.1, 0.05\}$. 
The results are depicted in Fig.~\ref{fig:usvt_sbm_varyingrho}. Again 
after rescaling, 
the curves for different observation probabilities $\rho$ 
align well with each other and decrease linearly with a rate of 
approximately $1$.

\begin{figure}[H]
\centering
\begin{tabular}{cc}
\includegraphics[width=.45\columnwidth]{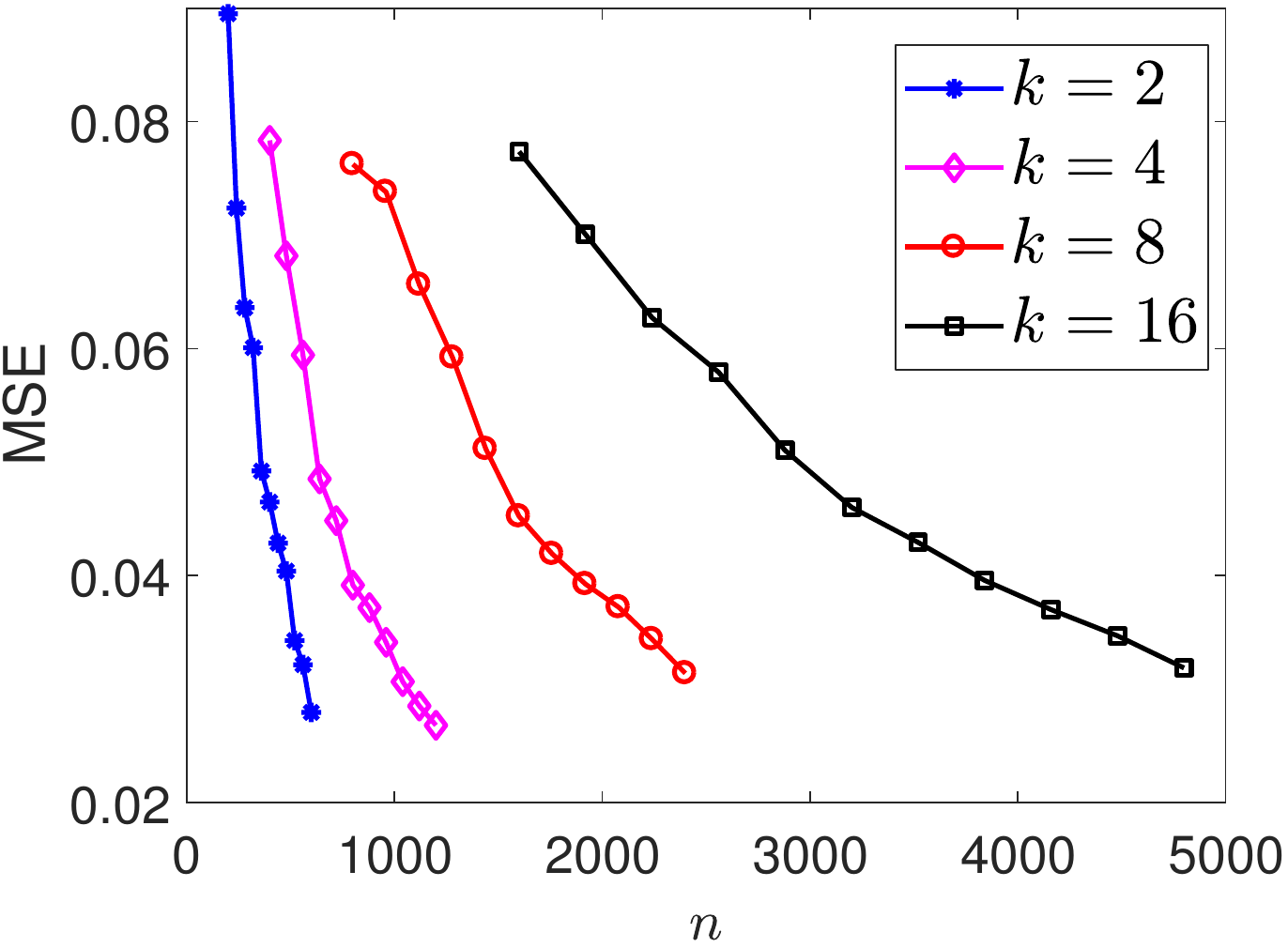}
&  \includegraphics[width=.45\columnwidth]{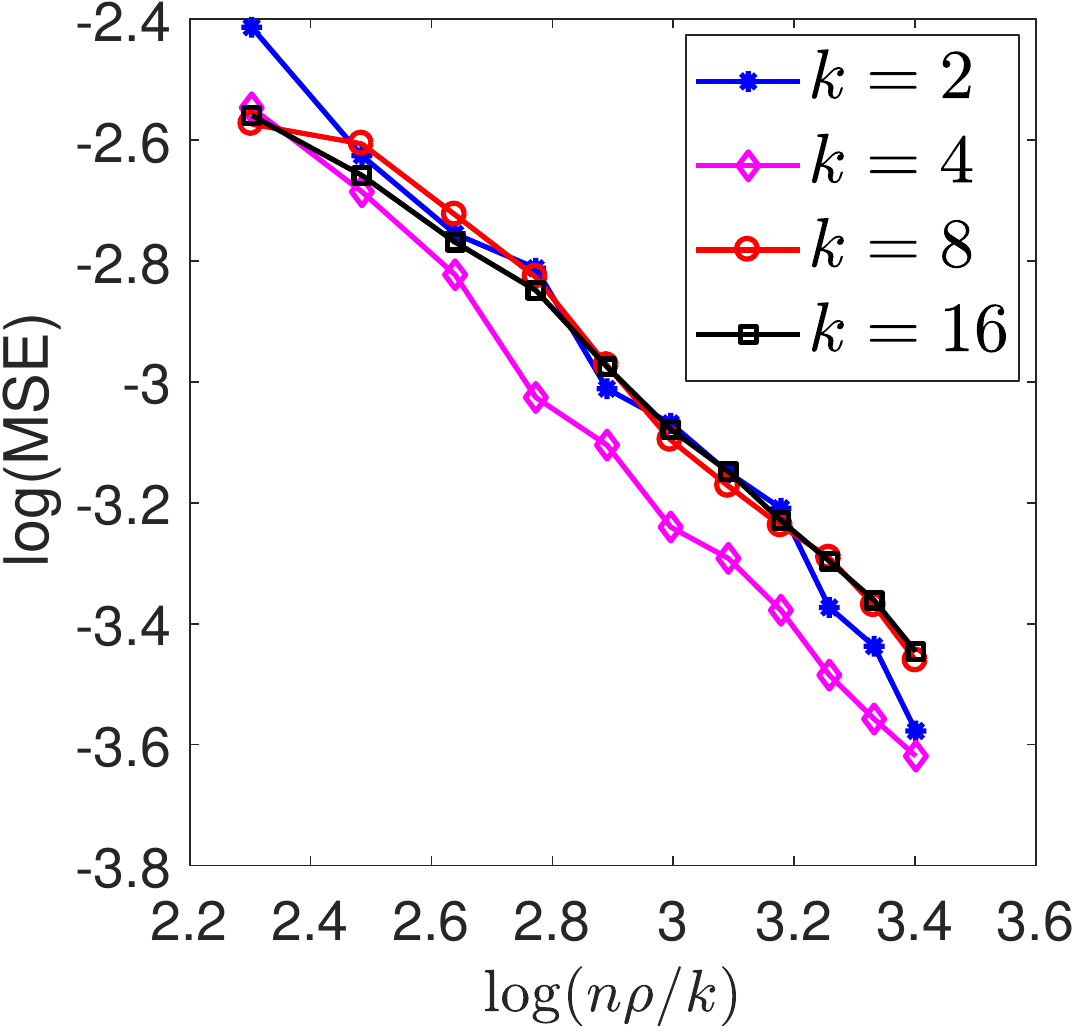} 
\\
(a)	& (b)  
\end{tabular}
\caption{The MSE error of USVT estimator under stochastic block models 
for varying number of blocks 
$k$ and a fixed observation probability $\rho=0.1$. 
Panel (a): MSE versus the number of vertices $n$; 
Panel (b): The log of MSE versus $\log (n \rho /k)$.
Each point
represents the average of MSE over $20$ independent runs.}
\label{fig:usvt_sbm}
\end{figure}

\begin{figure}[H]
\centering
\begin{tabular}{cc}
\includegraphics[width=.45\columnwidth]{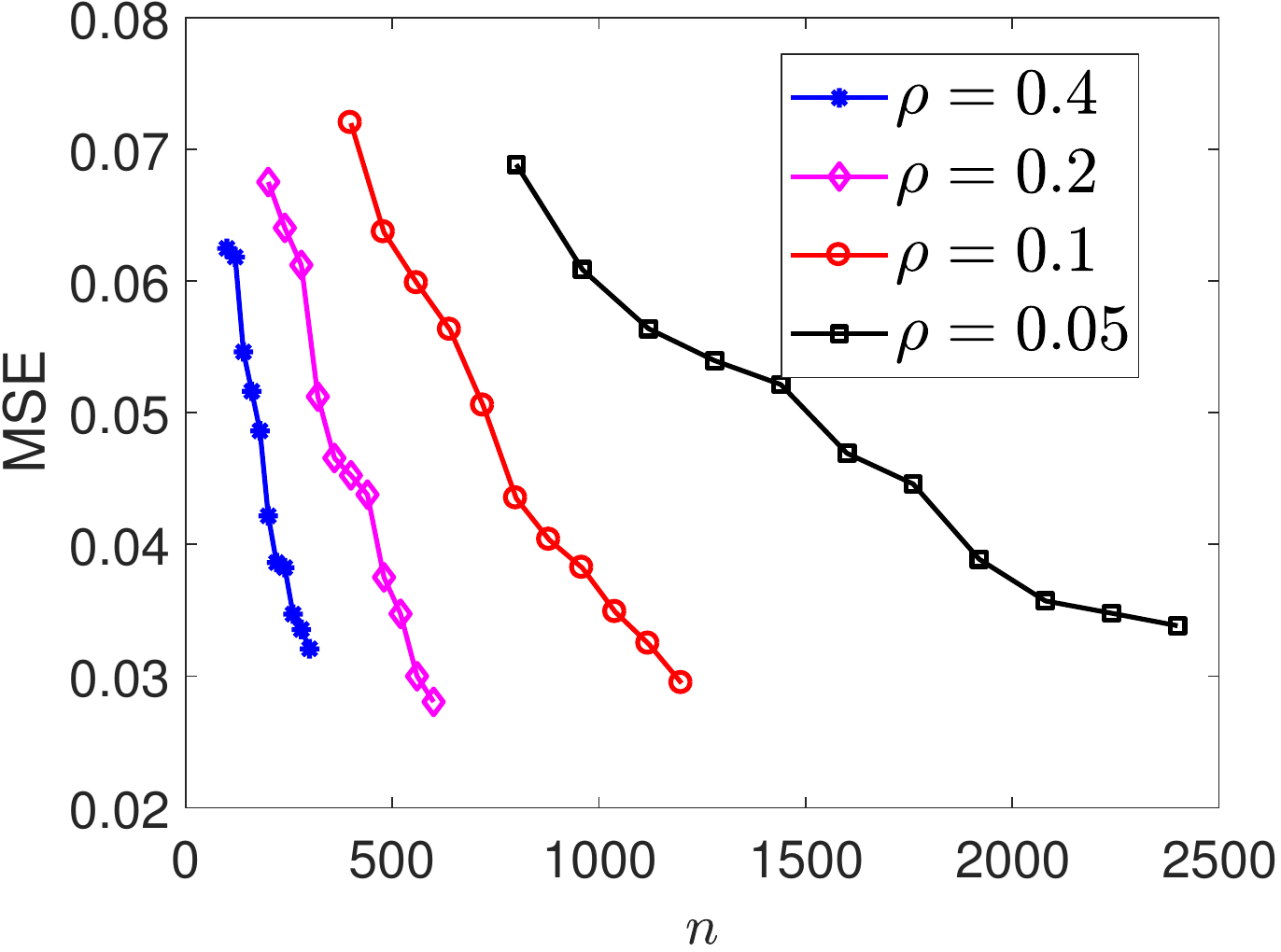}
&  \includegraphics[width=.45\columnwidth]{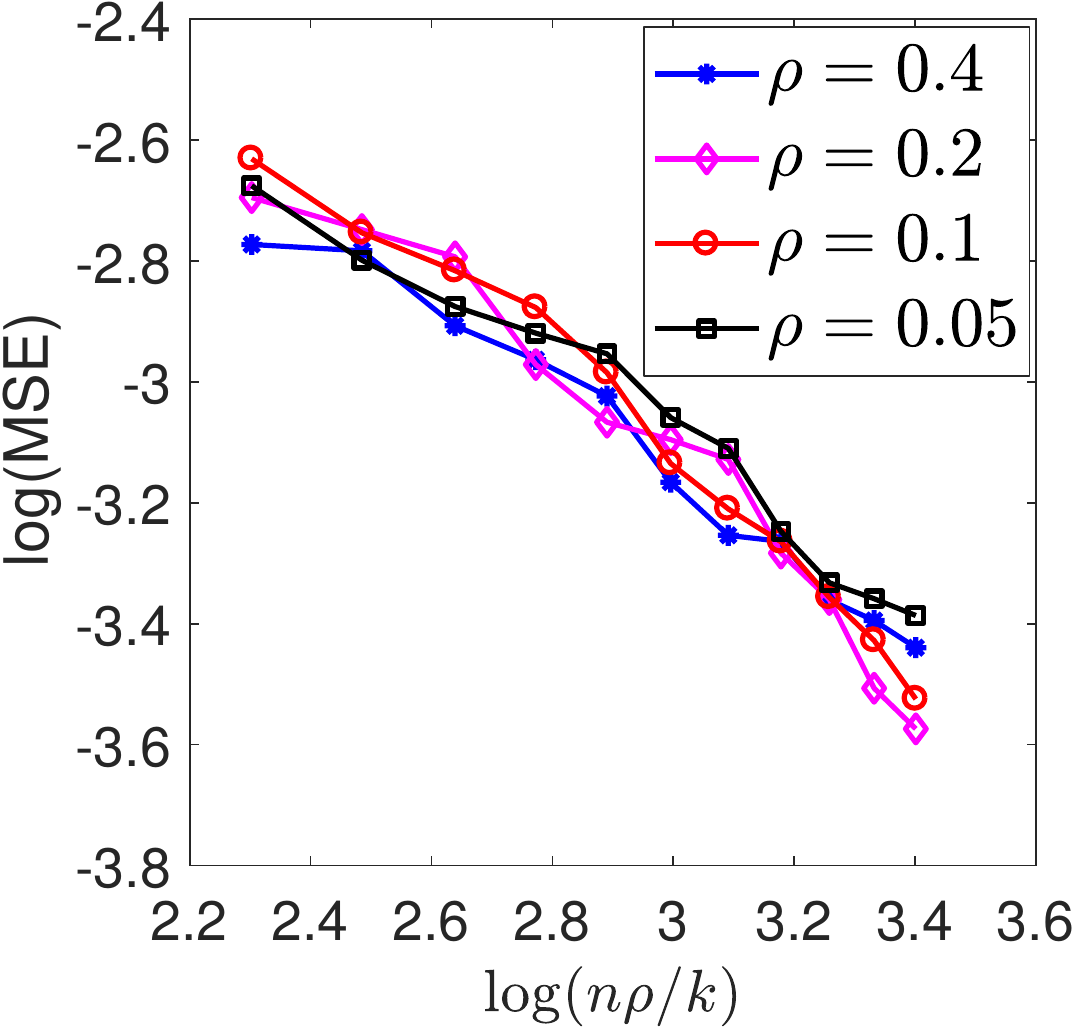} 
\\
(a)	& (b)  
\end{tabular}
\caption{The MSE error of USVT estimator under stochastic block models 
for varying observation probabilities
 and a fixed number of blocks $k=4$. 
Panel (a): MSE versus the number of vertices $n$; 
Panel (b): The log of MSE versus $\log (n \rho /k)$.
Each point
represents the average of MSE over $30$ independent runs. }
\label{fig:usvt_sbm_varyingrho}
\end{figure}

\subsection{Translation invariant graphon}
 For some $a>0$, let $h: [-a,a] \to \reals$ 
 denote an even  function, \ie, $h(x)=h(-x)$.
 Let us extends its domain to the real line by the periodic 
 extension such that $h(x+ 2k a) = h(x)$ for all
 $x \in [-a,a]$ and integers $k \in \integers.$
 By construction $h$ has a period $2a$.
 Using this function, we can define a translation-invariant
 graphon on the product space $[-a,a]\times [-a,a]$
 via $f(x,y)=h(x-y)$. Since $h$ is even, it follows that
 $f$ is symmetric. 
 Then the integral operator $\calT$ defined 
 in \prettyref{eq:operator} reduces to
 $$
 \left( \calT g \right) (x)= \frac{1}{2a} \int_{-a}^{a} h(x-y) g(y) \diff y
 = \frac{1}{2a} \left(h \ast g \right) (x), \quad \forall x \in [-a, a], 
 $$
 where $\ast$ denotes the convolution. 
 Hence, we can explicitly determine
 the eigenvalues of $\calT$ via Fourier analysis.
 In particular, suppose that $h$ has the following Fourier series expansion:
 $$
 h (x) = \sum_{k=-\infty}^\infty \hat{h}[k] e^{j \pi k x /a}, \quad
 \hat{h} [k] = \frac{1}{2a} \int_{-a}^{a} h (x) e^{-j \pi k x /a}
 \diff x.
 $$
 where throughout this section $j$ denotes the imaginary part such that $j^2=-1$,
 and $\hat{h}[k]$ are the Fourier coefficients. 
 Since $h$ is even, it follows that $\hat{h}[k]$'s are real 
 and $\hat{h}[k]=\hat{h}[-k]$. Fourier analysis entails 
 a one-to-one correspondence 
 between eigenvalues of $\calT$ and Fourier coefficients of $h$:
  $\lambda_k(\calT) = \hat{h}[k]$.

We specify $h: [-1, 1] \to \reals$ as $h(x) =|x|$ and 
simulate the graphon model with $f(x,y) = h(x-y)$ 
for $x, y \in [-1, 1]$ and the underlying measure $\mu$
being uniform over $[-1,1]$.  
Since $h(x) =|x|$, the Fourier coefficients
can be explicitly computed as 
$\lambda_k (\calT) =\hat{h}[k] = 2 \sin^2(\pi k /2)/(\pi^2 k^2 )$
with eigenfunctions
given by 
$\{ \cos(\pi k x)\}_{k=0}^\infty$ and $ \sin(\pi kx)\}_{k=1}^\infty$.
It follows from \prettyref{thm:operator} that the eigenvalues of 
$M$ satisfy
$$
\frac{1}{n^2} \sum_{i \ge r+1} \expect{ \lambda_i^2(M) }
\le O(n^{-1})  + O(r^{-3})
$$
uniformly over all integers $r \ge 0.$ 
Therefore, our theory predicts that  the MSE of USVT converges to 
zero at least in a rate of $(n\rho)^{-3/4}$. 
The simulation results for varying observation probabilities 
are depicted in Fig.~\ref{fig:usvt_transinv}. 
Panel (a) shows the MSE  converges to $0$ 
as the number of vertices $n$ increases. 
In Panel (b), we rescale the $x$-axis to $\log (n\rho) $ 
and the $y$-axis to the log of MSE. The curves for different $\rho$ align well with each other after the rescaling and decrease linearly with a slope of approximately $0.8$, which is close to $3/4$ as predicted by our theory.

\begin{figure}[H]
\centering
\begin{tabular}{cc}
\includegraphics[width=.45\columnwidth]{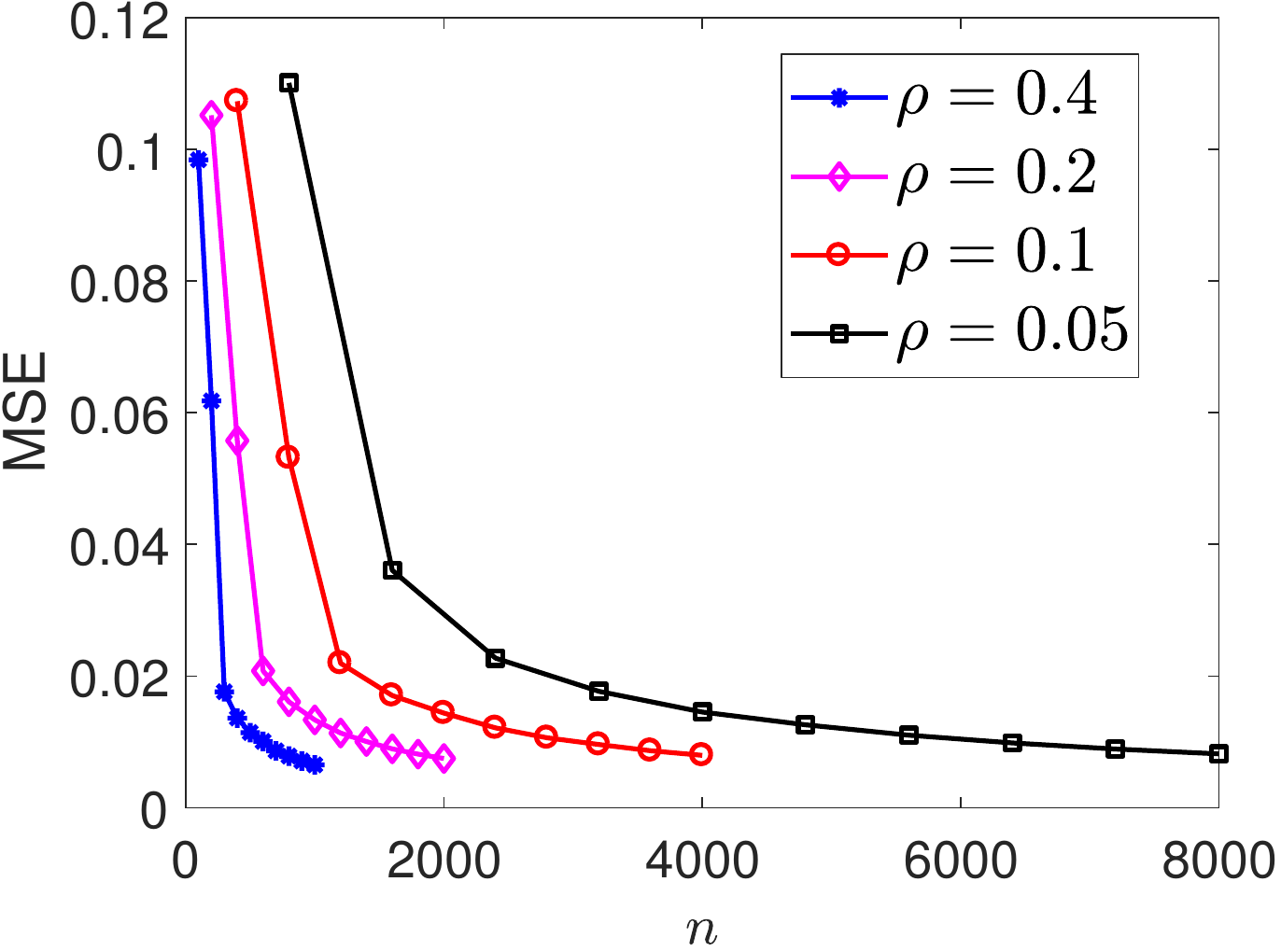}
&  \includegraphics[width=.45\columnwidth]{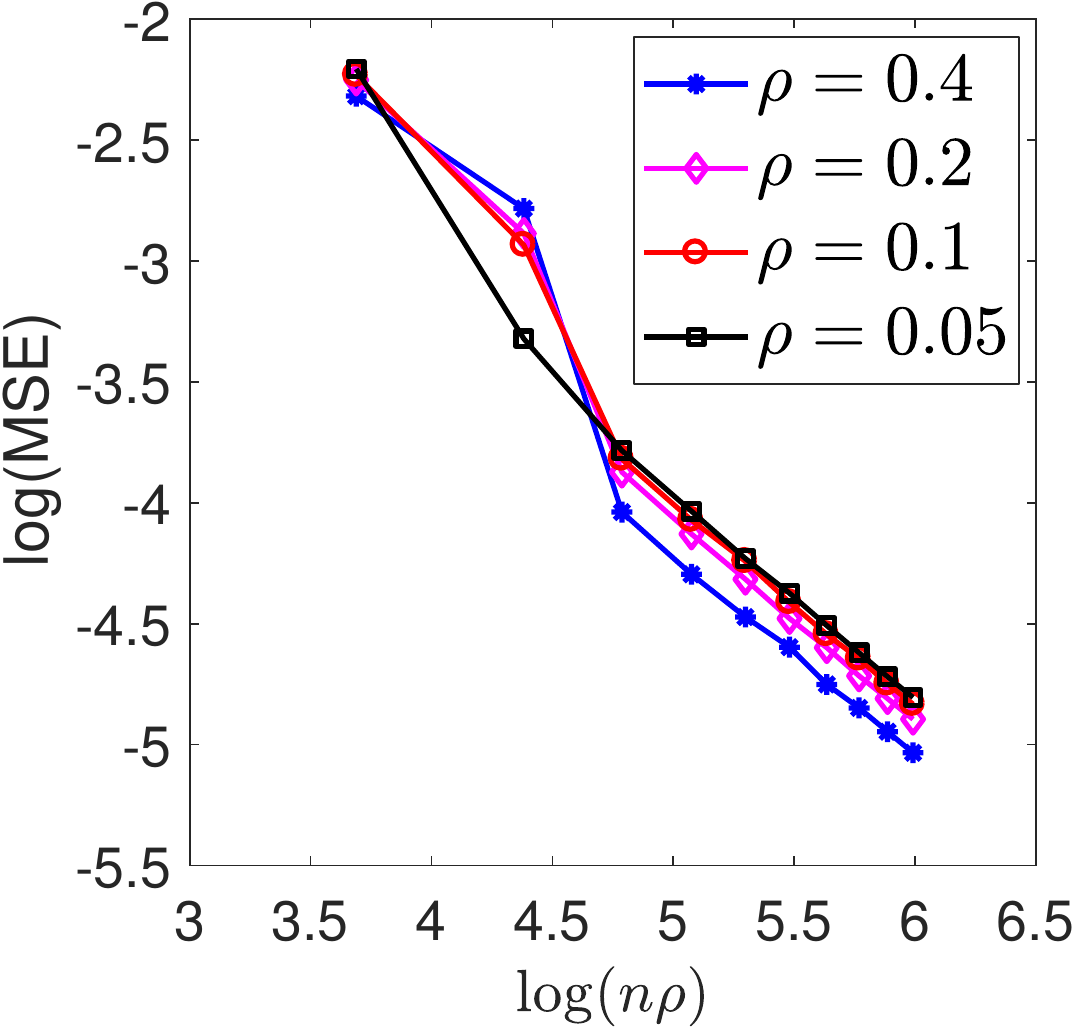} 
\\
(a)	& (b)  
\end{tabular}
\caption{The MSE error of USVT estimator under the translation invariant graphon
$f(x,y)=|x-y|$. 
Panel (a): MSE versus the number of vertices $n$; 
Panel (b): The log of MSE versus $\log( n \rho) .$ 
Each point
represents the average of MSE over $10$ independent runs.}
\label{fig:usvt_transinv}
\end{figure}

\subsection{Sobolev graphon}
In this section, we simulate the graphon
model with $\calX=[0,1]$ and
 $\mu$ being the uniform measure and 
 $f(x,y) = \min\{x, y\}$. 
 Then $\nabla_x f(x,y) = \indc{x \le y}$
 and $\nabla_y f(x,y) = \indc{ y \le x}$.
 Moreover, $| f(x,y) - f(x',y') | \le |x-x'| + |y-y'|$.
 However, the second-order weak derivatives 
 of $f$ do not exist. 
 Therefore, $f$ is Sobolev smooth with $\alpha=1.$
 In this case, one can get a bound  on the eigenvalue decay rate tighter 
 than \prettyref{prop:decay_rate} 
 by directly computing $\lambda_n(\calT)$ and invoking \prettyref{thm:operator}.
 Note that 
 $$
 \left( \calT g \right) (x) = \int_{0}^1 \min\{x, y \} g(y) \diff y
 = \int_{0}^x y g(y) \diff y + x \int_{x}^1 g(y) \diff y.
 $$
 Suppose $\phi$ is an eigenfunction of $\calT$ with
 eigenvalue $\lambda$. Then
 $$
 \int_{0}^x y \phi (y) \diff y + x \int_{x}^1 \phi(y) \diff y = \lambda \phi(x).
 $$
 It follows that $\phi(0)=1$ and $\lambda \phi'(x) = \int_{x}^1 \phi(y) \diff y$.
 It further implies that $\phi'(1)=0$ and $\lambda \phi'' + \phi =0.$
 Therefore, the eigenfunction and eigenvalue pairs are given by
 $$
 \phi_k (x) = \sin \frac{ (2k-1) \pi x}{2}, \text{ and } \lambda_k(\calT) = \left( \frac{2}{(2k-1) \pi }\right)^2.
 $$
It follows from \prettyref{thm:operator} that the eigenvalues of 
$M$ satisfy 
$$
\frac{1}{n^2} \sum_{i \ge r+1} \expect{ \lambda_i^2(M) }
\le O(n^{-1})  + O(r^{-3})
$$ 
uniformly over all integers $r \ge 0$. 
Therefore, our theory predicts that  the MSE of USVT converges to 
zero in a rate of $(n\rho)^{-3/4}$. 
The simulation results for varying observation probabilities 
are depicted in Fig.~\ref{fig:usvt_transinv}. 
The curves in Panel (b) for different $\rho$ align well with each other after the rescaling
and decrease linearly with a slope of approximately $0.7$, which is close to
$3/4$ as predicted by our theory.

\begin{figure}[H]
\centering
\begin{tabular}{cc}
\includegraphics[width=.45\columnwidth]{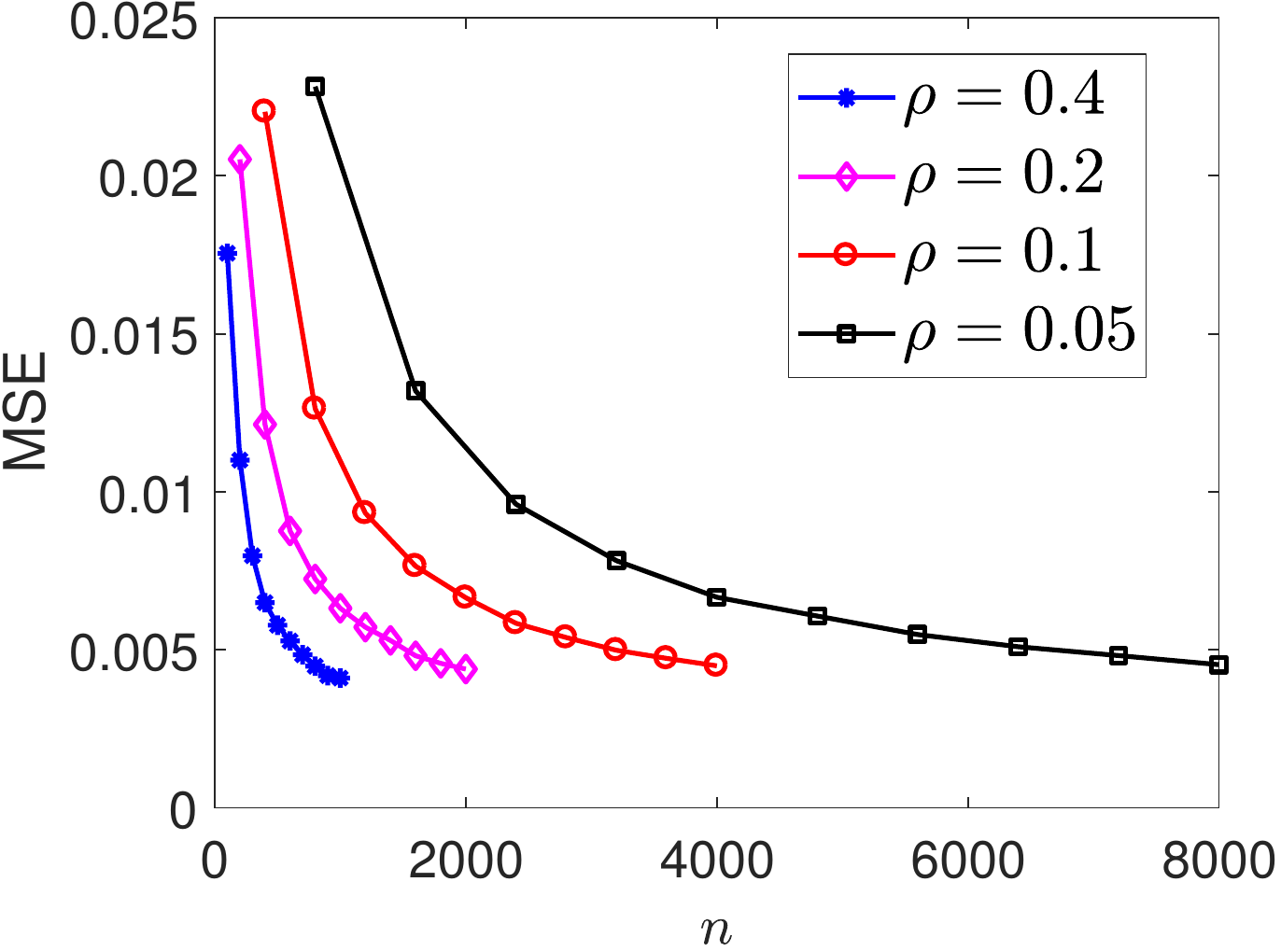}
&  \includegraphics[width=.45\columnwidth]{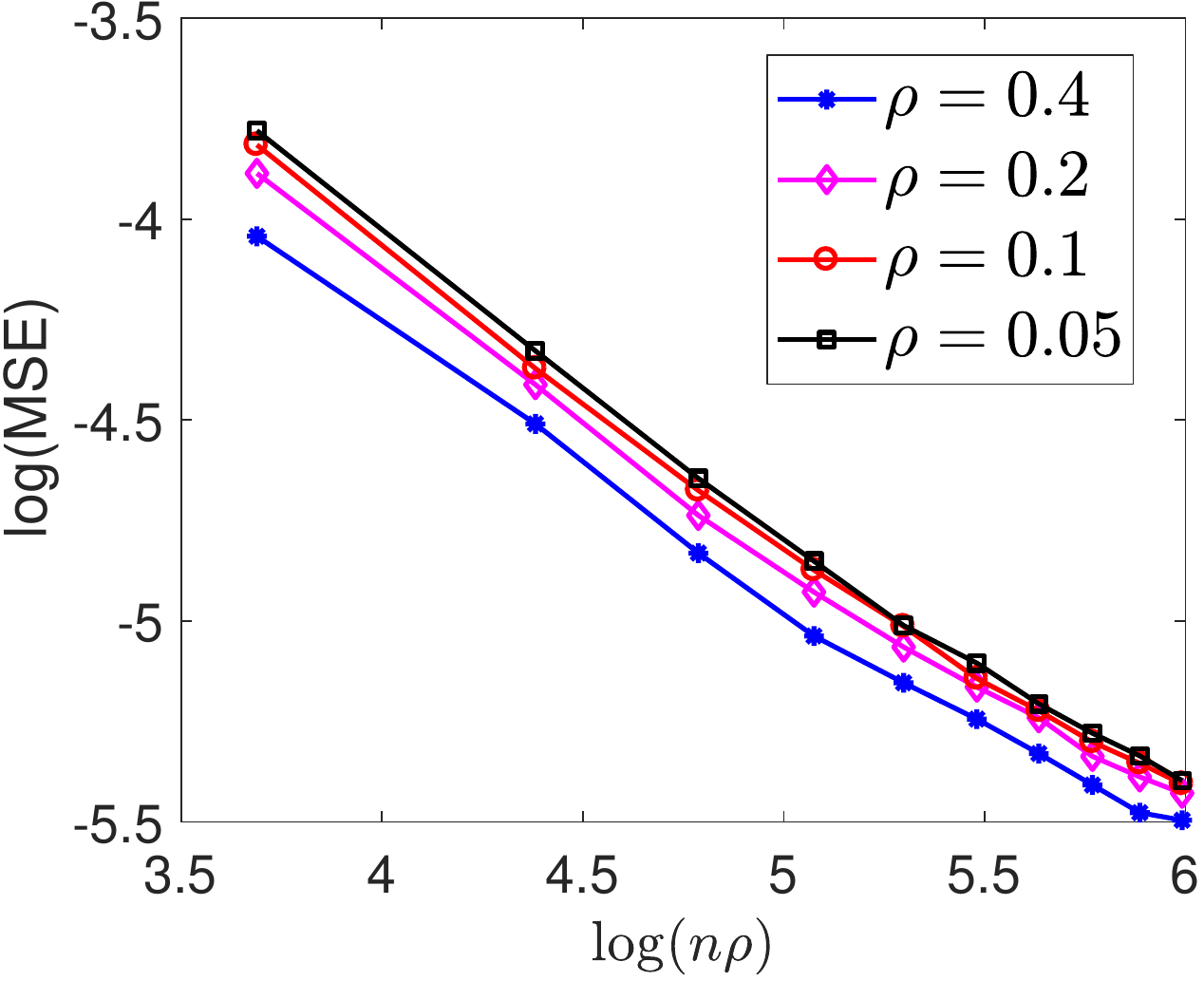} 
\\
(a)	& (b)  
\end{tabular}
\caption{The MSE error of USVT estimator under the first-order sobolev graphon
$f(x,y)=\min\{x, y\}.$ 
Panel (a): MSE versus the number of vertices $n$; 
Panel (b): The log of MSE versus $\log( n \rho) .$ 
Each point
represents the average of MSE over $10$ independent runs.}
\label{fig:usvt_sobolev}
\end{figure}

\section{Conclusions and future work}
In this paper, we establish upper bounds to the graphon estimation
error of the universal singular value thresholding algorithm in the
relatively sparse regime where the average  vertex degree is at least 
logarithmic in $n.$ In both the stochastic block model setting 
and the smooth graphon setting, we show that the estimation error
of USVT converges to $0$ as $n \to \infty$. Moreover, when
 graphon function $f$ belongs to \Hold or Sobolev space with smootheness
 index $\alpha$, we show that the rate of convergence is  at most
 $(n\rho)^{-2\alpha/(2\alpha+d)}$,
 approaching the minimax optimal rate $\log(n\rho)/(n\rho)$ proved in the literature for $d=1$, as 
 $f$ gest smoother. Furthermore, when $f$ is analytic with infintely many
 times differentiability, we show the rate of convergence is at most $\log^d(n\rho)/(n\rho)$.

A future direction important in both theory and practice is to develop computationally efficient graphon estimation procedures  in networks with bounded average degrees  and characterize the rate of convergence of the estimation error.
Another fundamental and open question is whether the minimax optimal rate can be achieved in polynomial-time. For stochastic block models with $k$ blocks, we observe a multiplicative gap of $k/\log k$ between the rate of convergence of USVT and the minimax optimal rate. For \Hold or Sobolev smooth graphons with smoothness index $\alpha$
and the latent feature space of dimension $d=1$,
we observe a multiplicative gap of $(n\rho)^{1/(2\alpha+1)}/\log(n\rho)$ between the rate of convergence of USVT and the minimax optimal rate. The minimax optimal rates are
unknown for \Hold or Sobolev smooth graphons with $d>1$ and analytic graphons with $d\ge 1.$

\section*{Acknowledgement}
The author would like to thank Yudong Chen, Christina Lee, and Yihong Wu for inspiring discussions
on spectral methods for graphon estimation.

\bibliographystyle{abbrv}

 \bibliography{graphon,graphical_combined}
 
 \begin{appendix}
 \section{Proof of \prettyref{eq:optimal_rate}} \label{app:optimal_rate}
 It has been shown in~\cite{klopp2015oracle,gao2016optimal} that the minimax
 optimal error rate of estimating $\alpha$-\Hold smooth graphon is given by:
 $$
\inf_{\hat{\mtx{M}}} \sup_{f \in \calH(\alpha, L) } \sup_{\mu \in \calP[0,1]} 
\MSE(\hat{\mtx{M}}) 
\asymp  \min_{ 1 \le k \le n} \left\{ \frac{k^2}{n^2 \rho} + \frac{\log k}{n \rho}
+  k^{-2 (\alpha \wedge 1)} \right\} \wedge 1.
$$ 
Next, we solve the above minimization problem over $k$ by dividing the analysis into four cases.
Combining all four cases completes the proof.  

{\bf Case 1:} $\log (n\rho) \ge \alpha \log n + (\alpha+1)
\log \log n $.
In this case, we must have $\alpha \le 1$.  We set 
$k= \lfloor (n^2\rho)^{1/(2\alpha+2)} \rfloor $
and get that 
\begin{align*}
\min_{ 1 \le k \le n} \left\{ \frac{k^2}{n^2 \rho} + \frac{\log k}{n \rho}
+  k^{-2 (\alpha \wedge 1)} \right\}
& \le 2 (n^2 \rho)^{ -\alpha / (\alpha+1) }
+ \frac{1}{2\alpha+2} \frac{\log(n^2 \rho)}{n \rho} \\
& \le  2 (n^2 \rho)^{ -\alpha / (\alpha+1) }
+ \frac{\log n}{n \rho} \\
& \le 3 (n^2 \rho)^{ -\alpha / (\alpha+1) }.
\end{align*}
where the last inequality holds because
$\log (n\rho) \ge \alpha \log n + (\alpha+1)
\log \log n $ is equivalent to
$
(n^2 \rho)^{ -\alpha / (\alpha+1) } \ge \log n /(n\rho).$

On the contrary, 
$$
\min_{ 1 \le k \le n} \left\{ \frac{k^2}{n^2 \rho} + \frac{\log k}{n \rho}
+  k^{-2 (\alpha \wedge 1)} \right\}
\ge \min_{ 1 \le k \le n} \left\{ \frac{k^2}{n^2 \rho} +  k^{-2\alpha } \right\}
\ge (n^2\rho)^{-\alpha/(\alpha+1)}.
$$


{\bf Case 2:} $\alpha \log n \le \log (n\rho) \le \alpha \log n + (\alpha+1)
\log \log n $. In this case, we still have $\alpha \le 1$ and set 
$k= \lfloor (n^2\rho)^{1/(2\alpha+2)} \rfloor $. We get that 
\begin{align*}
\min_{ 1 \le k \le n} \left\{ \frac{k^2}{n^2 \rho} + \frac{\log k}{n \rho}
+  k^{-2 (\alpha \wedge 1)} \right\}
& \le  2 (n^2 \rho)^{ -\alpha / (\alpha+1) }
+ \frac{\log n}{n \rho} \\
& \le  \frac{3\log n}{n \rho}
\le \frac{3 \log (n\rho)}{\alpha n\rho},
\end{align*}
where in the last two inequalities we used the 
assumption that $\alpha \log n \le \log (n\rho) \le \alpha \log n + (\alpha+1)
\log \log n $.

On the contrary,
$$
\min_{ 1 \le k \le n} \left\{ \frac{k^2}{n^2 \rho} + \frac{\log k}{n \rho}
+  k^{-2 (\alpha \wedge 1)} \right\}
\ge \min_{ 1 \le k \le n} \left\{ \frac{\log k}{n \rho} +  k^{-2\alpha } \right\}
\ge \frac{ \log (n\rho)}{4\alpha n\rho}.
$$

{\bf Case 3:} $\omega(1)=\log (n\rho) \le \alpha \log n$. 
In this case, we set
$$
k=\lfloor (n\rho)^{ \frac{1}{2 (\alpha \wedge 1)} } \rfloor 
$$ 
and get that 
$$
\min_{ 1 \le k \le n} \left\{ \frac{k^2}{n^2 \rho} + \frac{\log k}{n \rho}
+  k^{-2 (\alpha \wedge 1)} \right\}
\le  \frac{ \left( n \rho \right)^{  \frac{1}{\alpha \wedge 1}  }  }{n^2 \rho} 
+ \frac{1}{2 (\alpha \wedge 1) } \frac{\log (n\rho)}{n \rho} + \frac{1}{n \rho}
\le \frac{2}{n \rho} + 
\frac{1}{2 (\alpha \wedge 1) } \frac{\log (n\rho)}{n \rho},
$$
where the last inequality holds because 
$(n\rho)^{1/(\alpha\wedge 1)} \le n$. The proof of 
the lower bound is similar to  that in Case 2. 

{\bf Case 4:} $n \rho = O(1)$. In this case, we trivially have
$$
\min_{ 1 \le k \le n} \left\{ \frac{k^2}{n^2 \rho} + \frac{\log k}{n \rho}
+  k^{-2 (\alpha \wedge 1)} \right\} \wedge 1 \asymp 1.
$$

 \end{appendix}

\end{document}